\newtheorem{proposition}{Proposition}
\newtheorem{corollary}{Corollary}
\def\b{\ensuremath\boldsymbol}
\icmltitlerunning{Locally Linear Embedding and its Variants: Tutorial and Survey}
\begin{document}

\AddToShipoutPictureBG*{%
  \AtPageUpperLeft{%
    \setlength\unitlength{1in}%
    \hspace*{\dimexpr0.5\paperwidth\relax}
    \makebox(0,-0.75)[c]{\normalsize {\color{black} To appear as a part of an upcoming textbook on dimensionality reduction and manifold learning.}}
    }}

\twocolumn[
\icmltitle{Locally Linear Embedding and its Variants: Tutorial and Survey}

% It is OKAY to include author information, even for blind
% submissions: the style file will automatically remove it for you
% unless you've provided the [accepted] option to the icml2016
% package.
\icmlauthor{Benyamin Ghojogh}{bghojogh@uwaterloo.ca}
\icmladdress{Department of Electrical and Computer Engineering, 
\\Machine Learning Laboratory, University of Waterloo, Waterloo, ON, Canada}
\icmlauthor{Ali Ghodsi}{ali.ghodsi@uwaterloo.ca}
\icmladdress{Department of Statistics and Actuarial Science \& David R. Cheriton School of Computer Science, 
\\Data Analytics Laboratory, University of Waterloo, Waterloo, ON, Canada}
\icmlauthor{Fakhri Karray}{karray@uwaterloo.ca}
\icmladdress{Department of Electrical and Computer Engineering, 
\\Centre for Pattern Analysis and Machine Intelligence, University of Waterloo, Waterloo, ON, Canada}
\icmlauthor{Mark Crowley}{mcrowley@uwaterloo.ca}
\icmladdress{Department of Electrical and Computer Engineering, 
\\Machine Learning Laboratory, University of Waterloo, Waterloo, ON, Canada}

% You may provide any keywords that you
% find helpful for describing your paper; these are used to populate
% the "keywords" metadata in the PDF but will not be shown in the document
\icmlkeywords{Tutorial, Locally Linear Embedding}

\vskip 0.3in
]

\begin{abstract}
This is a tutorial and survey paper for Locally Linear Embedding (LLE) and its variants. The idea of LLE is fitting the local structure of manifold in the embedding space. In this paper, we first cover LLE, kernel LLE, inverse LLE, and feature fusion with LLE. Then, we cover out-of-sample embedding using linear reconstruction, eigenfunctions, and kernel mapping. Incremental LLE is explained for embedding streaming data. Landmark LLE methods using the Nystrom approximation and locally linear landmarks are explained for big data embedding. We introduce the methods for parameter selection of number of neighbors using residual variance, Procrustes statistics, preservation neighborhood error, and local neighborhood selection. Afterwards, Supervised LLE (SLLE), enhanced SLLE, SLLE projection, probabilistic SLLE, supervised guided LLE (using Hilbert-Schmidt independence criterion), and semi-supervised LLE are explained for supervised and semi-supervised embedding. Robust LLE methods using least squares problem and penalty functions are also introduced for embedding in the presence of outliers and noise. Then, we introduce fusion of LLE with other manifold learning methods including Isomap (i.e., ISOLLE), principal component analysis, Fisher discriminant analysis, discriminant LLE, and Isotop. Finally, we explain weighted LLE in which the distances, reconstruction weights, or the embeddings are adjusted for better embedding; we cover weighted LLE for deformed distributed data, weighted LLE using probability of occurrence, SLLE by adjusting weights, modified LLE, and iterative LLE. 
\end{abstract}

\section{Introduction}

\begin{figure*}[!t]
\centering
\includegraphics[width=6.5in]{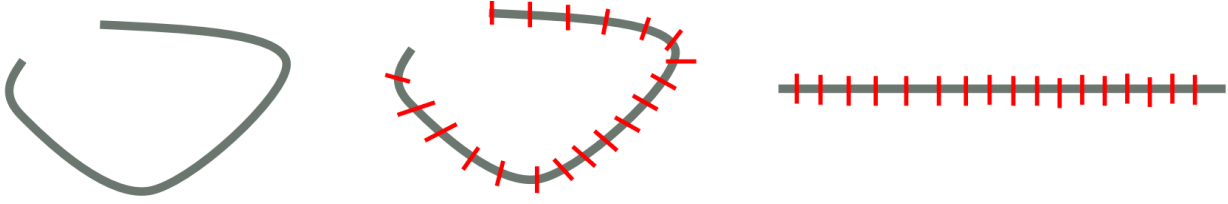}
\caption{Piece-wise local unfolding of manifold by LLE (in this example from two dimensions to one intrinsic dimension). This local unfolding is expected to totally unfold the manifold properly.}
\label{figure_LLE_spline}
\end{figure*}

Locally Linear Embedding (LLE) \cite{roweis2000nonlinear,chen2011locally} is a nonlinear spectral dimensionality reduction method \cite{saul2006spectral} which can be used for manifold embedding and feature extraction \cite{ghojogh2019feature}. LLE tries to preserve the local structure of data in the embedding space. In other words, the close points in the high-dimensional input space should also be close to each other in the low-dimensional embedding space. 
By this local fitting, hopefully the far points in the input space also fall far away from each other in the embedding space. This idea of fitting locally and thinking globally is the main idea of LLE \cite{saul2002think,saul2003think,yotov2005think,wu2018think}. In another perspective, the idea of local fitting by LLE is similar to idea of piece-wise spline regression \cite{marsh2001spline}. LLE unfolds the nonlinear manifold by locally unfolding of manifold piece by piece and it hopes that these local unfoldings result in a suitable total manifold unfolding (see Fig. \ref{figure_LLE_spline}). 
In general, we can say that most of the unsupervised manifold learning methods have the idea of local fitting. On the other hand, most of the supervised manifold learning methods are based on increasing and decreasing the inter- and intra-class variances, respectively \cite{ghojogh2019fisher}. 
We denote the $n$ data points in the input and feature spaces by $\{\b{x}_i \in \mathbb{R}^d\}_{i=1}^n$ and $\{\b{y}_i \in \mathbb{R}^p\}_{i=1}^n$, respectively, where we usually have $p \ll d$.  
LLE has many different applications, such as in medical areas \cite{liu2013locally,he2020discriminative}. 

The remainder of this paper is as follows. We explain LLE and kernel LLE in Sections \ref{section_LLE} and \ref{section_kernel_LLE}, respectively. Different out-of-sample extensions for LLE are introduced in Section \ref{section_LLE_outOfSample}. 
Section \ref{section_incremental_LLE} explains incremental LLE for streaming data. 
Landmark LLE for big data embedding is explained in Section \ref{section_landmark_LLE}. 
Methods for optimal parameter selection for the number of neighbors are introduced in Section \ref{section_parameter_selection_LLE}. 
Some supervised and semi-supervised LLE are covered in Section \ref{section_supervised_LLE}. 
Robust LLE for handling noise and outliers in LLE are explained in Section \ref{section_robust_LLE}.
We introduce fusion of LLE with other manifold learning methods in Section \ref{section_LLE_fusion_with_others}. Section \ref{section_weighted_LLE} explains weighted LLE. Finally, Section \ref{section_conclusion} concludes the paper. 

\section*{Required Background for the Reader}

This paper assumes that the reader has general knowledge of calculus, linear algebra, and basics of optimization. 

\section{Locally Linear Embedding}\label{section_LLE}

LLE, first proposed in \cite{roweis2000nonlinear} and developed in \cite{saul2000introduction,saul2003think}, has three steps \cite{ghodsi2006dimensionality}. First, it finds $k$-Nearest Neighbors ($k$NN) graph of all training points. Then, it tries to find weights for reconstructing every point by its neighbors, using linear combination. Using the same found weights, it embeds every point by a linear combination of its embedded neighbors. 
The main idea of LLE is using the same reconstruction weights in the lower dimensional embedding space as in the high dimensional input space. 
Figure \ref{figure_LLE} illustrates these three steps.

\begin{figure*}[!t]
\centering
\includegraphics[width=6.5in]{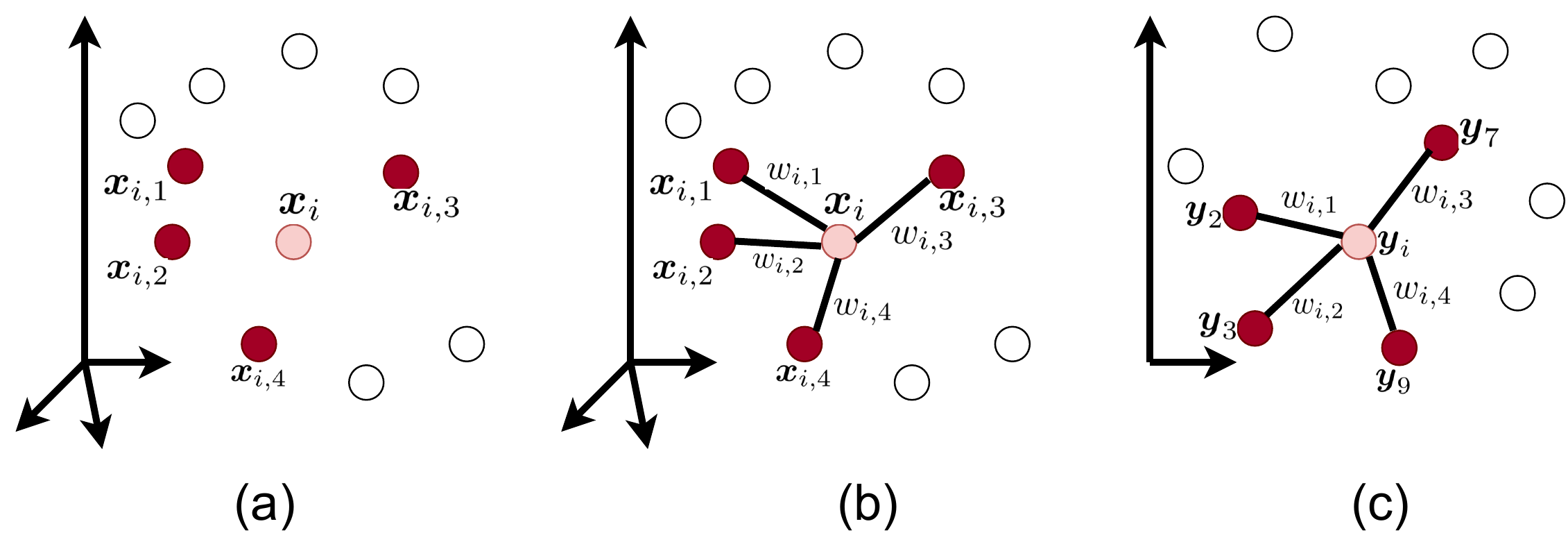}
\caption{Steps in LLE for embedding high dimensional data in a lower dimensional embedding space: (a) finding $k$-nearest neighbors, (b) linear reconstruction by the neighbors, and (c) linear embedding using the calculated weights. In this figure, it is assumed that $k=4$, $\b{x}_{i,1} = \b{x}_2$, $\b{x}_{i,2} = \b{x}_3$, $\b{x}_{i,3} = \b{x}_7$, and $\b{x}_{i,4} = \b{x}_9$.}
\label{figure_LLE}
\end{figure*}

\subsection{$k$-Nearest Neighbors}

A $k$NN graph is formed using pairwise Euclidean distance between the data points. Therefore, every data point has $k$ neighbors. Let $\b{x}_{ij} \in \mathbb{R}^d$ denote the $j$-th neighbor of $\b{x}_i$ and let the matrix $\mathbb{R}^{d \times k} \ni \b{X}_i := [\b{x}_{i1}, \dots, \b{x}_{ik}]$ include the $k$ neighbors of $\b{x}_i$.

\subsection{Linear Reconstruction by the Neighbors}

In the second step, we find the weights for linear reconstruction of every point by its $k$NN. The optimization for this linear reconstruction in the high dimensional input space is formulated as:
\begin{equation}\label{equation_LLE_linearReconstruct}
\begin{aligned}
& \underset{\widetilde{\b{W}}}{\text{minimize}}
& & \varepsilon(\widetilde{\b{W}}) := \sum_{i=1}^n \Big|\Big|\b{x}_i - \sum_{j=1}^k \widetilde{w}_{ij} \b{x}_{ij}\Big|\Big|_2^2, \\
& \text{subject to}
& & \sum_{j=1}^k \widetilde{w}_{ij} = 1, ~~~ \forall i \in \{1, \dots, n\},
\end{aligned}
\end{equation}
where $\mathbb{R}^{n \times k} \ni \widetilde{\b{W}} := [\widetilde{\b{w}}_1, \dots, \widetilde{\b{w}}_n]^\top$ includes the weights, $\mathbb{R}^k \ni \widetilde{\b{w}}_i := [\widetilde{w}_{i1}, \dots, \widetilde{w}_{ik}]^\top$ includes the weights of linear reconstruction of the $i$-th data point using its $k$ neighbors, and $\b{x}_{ij} \in \mathbb{R}^d$ is the $j$-th neighbor of the $i$-th data point.

The constraint $\sum_{j=1}^k \widetilde{w}_{ij} = 1$ means that the weights of linear reconstruction sum to one for every point. Note that the fact that some weights may be negative causes the problem of explosion of some weights because very large positive and negative weights can cancel each other to have a total sum of one. However, this problem does not occur because, as we will see, the solution to this optimization problem has a closed form; thus, weights do not explode. If the solution was found iteratively, the weights would grow and explode gradually \cite{ghojogh2019locally}.  

We can restate the objective $\varepsilon(\widetilde{\b{W}})$ as:
\begin{align}
\varepsilon(\widetilde{\b{W}}) = \sum_{i=1}^n ||\b{x}_i - \b{X}_i \widetilde{\b{w}}_i||_2^2.
\end{align}
The constraint $\sum_{j=1}^k \widetilde{w}_{ij} = 1$ implies that $\b{1}^\top \widetilde{\b{w}}_i = 1$; therefore, $\b{x}_i = \b{x}_i \b{1}^\top \widetilde{\b{w}}_i$. We can simplify the term in $\varepsilon(\widetilde{\b{W}})$ as:
\begin{align*}
||\b{x}_i &- \b{X}_i \widetilde{\b{w}}_i||_2^2 = ||\b{x}_i \b{1}^\top \widetilde{\b{w}}_i - \b{X}_i \widetilde{\b{w}}_i||_2^2 \\
&= ||(\b{x}_i \b{1}^\top - \b{X}_i)\, \widetilde{\b{w}}_i||_2^2 \\
&= \widetilde{\b{w}}_i^\top (\b{x}_i \b{1}^\top - \b{X}_i)^\top (\b{x}_i \b{1}^\top - \b{X}_i)\, \widetilde{\b{w}}_i \\
&= \widetilde{\b{w}}_i^\top \b{G}_i\, \widetilde{\b{w}}_i,
\end{align*}
where $\b{G}_i$ is a gram matrix defined as:
\begin{align}\label{equation_G}
\mathbb{R}^{k \times k} \ni \b{G}_i := (\b{x}_i \b{1}^\top - \b{X}_i)^\top (\b{x}_i \b{1}^\top - \b{X}_i).
\end{align}
Finally, Eq. (\ref{equation_LLE_linearReconstruct}) can be rewritten as:
\begin{equation}\label{equation_LLE_linearReconstruct_2}
\begin{aligned}
& \underset{\{\widetilde{\b{w}}_i\}_{i=1}^n}{\text{minimize}}
& & \sum_{i=1}^n \widetilde{\b{w}}_i^\top \b{G}_i\, \widetilde{\b{w}}_i, \\
& \text{subject to}
& & \b{1}^\top \widetilde{\b{w}}_i = 1, ~~~ \forall i \in \{1, \dots, n\}.
\end{aligned}
\end{equation}
The Lagrangian for Eq. (\ref{equation_LLE_linearReconstruct_2}) is \cite{boyd2004convex}:
\begin{align*}
\mathcal{L} = \sum_{i=1}^n \widetilde{\b{w}}_i^\top \b{G}_i\, \widetilde{\b{w}}_i - \sum_{i=1}^n \lambda_i\, (\b{1}^\top \widetilde{\b{w}}_i - 1).
\end{align*}
Setting the derivative of Lagrangian to zero gives:
\begin{align}
\mathbb{R}^{k} \ni \frac{\partial \mathcal{L}}{\partial \widetilde{\b{w}}_i} &= 2 \b{G}_i \widetilde{\b{w}}_i - \lambda_i \b{1} \overset{\text{set}}{=} \b{0}, \nonumber \\
&\implies \widetilde{\b{w}}_i = \frac{1}{2} \b{G}_i^{-1} \lambda_i \b{1} = \frac{\lambda_i}{2} \b{G}_i^{-1} \b{1}. \label{equation_derivative_lagrangian_1} \\
\mathbb{R} \ni \frac{\partial \mathcal{L}}{\partial \lambda} &= \b{1}^\top \widetilde{\b{w}}_i - 1 \overset{\text{set}}{=} 0 \implies \b{1}^\top \widetilde{\b{w}}_i = 1. \label{equation_derivative_lagrangian_2}
\end{align}
Using Eqs. (\ref{equation_derivative_lagrangian_1}) and (\ref{equation_derivative_lagrangian_2}), we have:
\begin{align}
\frac{\lambda_i}{2} \b{1}^\top \b{G}_i^{-1} \b{1} = 1 \implies \lambda_i = \frac{2}{\b{1}^\top \b{G}_i^{-1} \b{1}}. \label{equation_derivative_lagrangian_3}
\end{align}
Using Eqs. (\ref{equation_derivative_lagrangian_1}) and (\ref{equation_derivative_lagrangian_3}), we have:
\begin{align}\label{equation_w_tilde_solution}
\widetilde{\b{w}}_i = \frac{\lambda_i}{2} \b{G}_i^{-1} \b{1} = \frac{\b{G}_i^{-1} \b{1}}{\b{1}^\top \b{G}_i^{-1} \b{1}}.
\end{align}
According to Eq. (\ref{equation_G}), the rank of matrix $\b{G}_i \in \mathbb{R}^{k \times k}$ is at most equal to $\min(k,d)$. If $d < k$, then $\b{G}_i$ is singular and $\b{G}_i$ should be replaced by $\b{G}_i + \epsilon \b{I}$ where $\epsilon$ is a small positive number. Usually, the data are high dimensional (so $k \ll d$) like in images and thus if $\b{G}_i$ is full rank, we will not have any problem with inverting it. This strengthening the main diagonal of $\b{G}$ is referred to as regularization in LLE \cite{daza2010regularization}. This numerical technique is widely used in manifold and subspace learning (e.g., see \cite{mika1999fisher}).

\subsection{Linear Embedding}\label{section_LLE_linearEmbedding}

In the second step, we found the weights for linear reconstruction in the high dimensional input space. In the third step, we embed data in the low dimensional embedding space using the same weights as in the input space. This linear embedding can be formulated as the following optimization problem: 
\begin{equation}\label{equation_LLE_linearEmbedding}
\begin{aligned}
& \underset{\b{Y}}{\text{minimize}}
& & \sum_{i=1}^n \Big|\Big|\b{y}_i - \sum_{j=1}^n w_{ij} \b{y}_j\Big|\Big|_2^2, \\
& \text{subject to}
& & \frac{1}{n} \sum_{i=1}^n \b{y}_i \b{y}_i^\top = \b{I}, \\
& & & \sum_{i=1}^n \b{y}_i = \b{0},
\end{aligned}
\end{equation}
where $\b{I}$ is the identity matrix, the rows of $\mathbb{R}^{n \times p} \ni \b{Y} := [\b{y}_1, \dots, \b{y}_n]^\top$ are the embedded data points (stacked row-wise), $\b{y}_i \in \mathbb{R}^p$ is the $i$-th embedded data point, and $w_{ij}$ is the weight obtained from the linear reconstruction if $\b{x}_j$ is a neighbor of $\b{x}_i$ and zero otherwise:
\begin{align}\label{equaion_LLE_weight_and_weightHat}
w_{ij} := 
\left\{
    \begin{array}{ll}
        \widetilde{w}_{ij} & \mbox{if } \b{x}_j \in k\text{NN}(\b{x}_i) \\
        0 & \mbox{otherwise}.
    \end{array}
\right.
\end{align}

The second constraint in Eq. (\ref{equation_LLE_linearEmbedding}) ensures the zero mean of embedded data points. The first and second constraints together satisfy having unit covariance for the embedded points.

Suppose $\mathbb{R}^n \ni \b{w}_i := [w_{i1}, \dots, w_{in}]^\top$ and let $\mathbb{R}^n \ni \b{1}_i := [0, \dots, 1, \dots, 0]^\top$ be the vector whose $i$-th element is one and other elements are zero.
The objective function in Eq. (\ref{equation_LLE_linearEmbedding}) can be restated as:
\begin{align*}
\sum_{i=1}^n \Big|\Big|\b{y}_i - \sum_{j=1}^n w_{ij} \b{y}_j\Big|\Big|_2^2 = \sum_{i=1}^n ||\b{Y}^\top\b{1}_i - \b{Y}^\top\b{w}_i||_2^2, 
\end{align*}
which can be stated in matrix form:
\begin{align}\label{equation_LLE_linearEmbedding_objective}
\sum_{i=1}^n ||\b{Y}^\top\b{1}_i &- \b{Y}^\top\b{w}_i||_2^2 = ||\b{Y}^\top\b{I} - \b{Y}^\top\b{W}^\top||_F^2 \nonumber \\
&= ||\b{Y}^\top (\b{I} - \b{W})^\top||_F^2,
\end{align}
where the $i$-th row of $\mathbb{R}^{n \times n} \ni \b{W} := [\b{w}_1, \dots, \b{w}_n]^\top$ includes the weights for the $i$-th data point and $||.||_F$ denotes the Frobenius norm of matrix.
The Eq. (\ref{equation_LLE_linearEmbedding_objective}) is simplified as:
\begin{align}
||\b{Y}^\top(\b{I} - \b{W})^\top||_F^2 &= \textbf{tr}\big((\b{I} - \b{W})\b{Y}\b{Y}^\top(\b{I} - \b{W})^\top\big) \nonumber \\
&= \textbf{tr}\big(\b{Y}^\top(\b{I} - \b{W})^\top(\b{I} - \b{W})\b{Y}\big) \nonumber \\
&= \textbf{tr}(\b{Y}^\top\b{M}\b{Y}),
\end{align}
where $\textbf{tr}(.)$ denotes the trace of matrix and:
\begin{align}\label{equation_M}
\mathbb{R}^{n \times n} \ni \b{M} := (\b{I} - \b{W})^\top (\b{I} - \b{W}).
\end{align}
Note that $(\b{I} - \b{W})$ is the Laplacian of matrix $\b{W}$ because the columns of $\b{W}$, which are $\b{w}_i$'s, add to one (for the constraint used in Eq. (\ref{equation_LLE_linearReconstruct})). Hence, according to Eq. (\ref{equation_M}), the matrix $\b{M}$ can be considered as the gram matrix over the Laplacian of weight matrix.

Finally, Eq. (\ref{equation_LLE_linearEmbedding}) can be rewritten as:
\begin{equation}\label{equation_LLE_linearEmbedding_2}
\begin{aligned}
& \underset{\b{Y}}{\text{minimize}}
& & \textbf{tr}(\b{Y}^\top\b{M}\b{Y}), \\
& \text{subject to}
& & \frac{1}{n} \b{Y}^\top \b{Y} = \b{I}, \\
& & & \b{Y}^\top \b{1} = \b{0},
\end{aligned}
\end{equation}
where the dimensionality of $\b{1}$ and $\b{0}$ are $\mathbb{R}^n$ and $\mathbb{R}^p$, respectively.
Note that we will show in Section \ref{section_interpret_LLE_using_HSIC} that Eq. (\ref{equation_LLE_linearEmbedding_2}) can be interpreted as maximization of dependence between the input data $\b{X}$ and the embedding $\b{Y}$. 
We will show later, in Proposition \ref{proposition_implicite_second_constraint}, that the second constraint will be satisfied implicitly. Therefore, if we ignore the second constraint, the Lagrangian for Eq. (\ref{equation_LLE_linearEmbedding_2}) is \cite{boyd2004convex}:
\begin{align*}
\mathcal{L} = \textbf{tr}(\b{Y}^\top\b{M}\b{Y}) - \textbf{tr}\big(\b{\Lambda}^\top (\frac{1}{n} \b{Y}^\top \b{Y} - \b{I})\big),
\end{align*}
where $\b{\Lambda} \in \mathbb{R}^{n \times n}$ is a diagonal matrix including the Lagrange multipliers. 
Equating derivative of $\mathcal{L}$ to zero gives us:
\begin{align}
&\mathbb{R}^{n \times p} \ni \frac{\partial \mathcal{L}}{\partial \b{Y}} = 2\b{M}\b{Y} - \frac{2}{n} \b{Y} \b{\Lambda} \overset{\text{set}}{=} \b{0} \nonumber \\
&\implies \b{M}\b{Y} = \b{Y} (\frac{1}{n}\b{\Lambda}), \label{equation_LLE_linearEmbedding_eigenproblem}
\end{align}
which is the eigenvalue problem for $\b{M}$ \cite{ghojogh2019eigenvalue}. Therefore, the columns of $\b{Y}$ are the eigenvectors of $\b{M}$ where eigenvalues are the diagonal elements of $(1/n)\b{\Lambda}$.

As Eq. (\ref{equation_LLE_linearEmbedding_2}) is a minimization problem, the columns of $\b{Y}$ should be sorted from the smallest to largest eigenvalues.
Moreover, recall that we explained $(\b{I} - \b{W})$ in $\b{M}$ is the Laplacian matrix for the weights $\b{W}$.
It is well-known in linear algebra and graph theory that if a graph has $k$ disjoint connected parts, its Laplacian matrix has $k$ zero eigenvalues (see {\citep[Theorem 3.10]{marsden2013eigenvalues}} and \cite{polito2002grouping,ahmadizadeh2017eigenvalues}).
As the $k$NN graph, or $\b{W}$, is a connected graph, $(\b{I} - \b{W})$ has one zero eigenvalue whose eigenvector is $\b{1} = [1, 1, \dots, 1]^\top$. After sorting the eigenvectors from smallest to largest eigenvalues, we ignore the first eigenvector having zero eigenvalue and take the $p$ smallest eigenvectors of $\b{M}$ with non-zero eigenvalues as the columns of $\b{Y} \in \mathbb{R}^{n \times p}$.

\begin{proposition}\label{proposition_implicite_second_constraint}
The fact that we have the eigenvector $\b{1}$ with zero eigenvalue implicitly ensures that $\sum_{i=1}^n \b{y}_i = \b{Y}^\top \b{1} = \b{0}$ which was the second constraint.
\end{proposition}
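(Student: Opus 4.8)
The plan is to reduce the statement to the orthogonality of eigenvectors of a symmetric matrix. First I would record the one structural fact that drives everything: $\b{M} = (\b{I}-\b{W})^\top(\b{I}-\b{W})$ is real, symmetric, and positive semi-definite, so by the spectral theorem it admits an orthonormal basis of eigenvectors, and in particular eigenvectors belonging to \emph{distinct} eigenvalues are automatically orthogonal. This is the mechanism I will exploit, rather than any direct computation involving $\sum_i \b{y}_i$.

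Next I would verify explicitly that $\b{1}$ is the eigenvector associated with the zero eigenvalue. Because the reconstruction weights of each point sum to one, we have $\b{W}\b{1} = \b{1}$, hence $(\b{I}-\b{W})\b{1} = \b{0}$ and therefore $\b{M}\b{1} = (\b{I}-\b{W})^\top(\b{I}-\b{W})\b{1} = \b{0}$. This confirms the hypothesis of the proposition: $\b{1}$ is a genuine eigenvector of $\b{M}$ with eigenvalue $0$, consistent with the earlier discussion that the Laplacian $(\b{I}-\b{W})$ of a connected graph has exactly one zero eigenvalue.

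The core step is then to connect this to the embedding. Since the $k$NN graph is connected, the zero eigenvalue of $\b{M}$ is simple, so its eigenspace is exactly $\mathrm{span}(\b{1})$. The columns of $\b{Y}$ are, by construction, the $p$ eigenvectors corresponding to the smallest \emph{nonzero} eigenvalues; hence every column of $\b{Y}$ lives in an eigenspace of $\b{M}$ belonging to a nonzero eigenvalue, which is distinct from the zero eigenspace. By the orthogonality of eigenvectors of a symmetric matrix across distinct eigenvalues, each column of $\b{Y}$ is orthogonal to $\b{1}$. Assembling these relations column by column yields $\b{Y}^\top \b{1} = \b{0}$, and writing this out componentwise is precisely $\sum_{i=1}^n \b{y}_i = \b{0}$, the second constraint of Eq.~(\ref{equation_LLE_linearEmbedding_2}).

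I expect the only delicate point to be the appeal to simplicity of the zero eigenvalue, which is where graph connectivity is essential: if the graph were disconnected, $\b{M}$ would have several zero eigenvalues and one would have to argue more carefully that the chosen nonzero-eigenvalue eigenvectors still avoid the (now higher-dimensional) zero eigenspace. Under the connectedness assumption already in force, this difficulty disappears. A secondary technical remark is that degeneracies among the nonzero eigenvalues do not affect the argument: even if some eigenvectors of $\b{Y}$ are selected within a repeated nonzero eigenspace, they still sit in an eigenspace distinct from $\mathrm{span}(\b{1})$ and are therefore orthogonal to $\b{1}$ regardless of how the basis of that eigenspace is chosen.
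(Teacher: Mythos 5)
Your proof is correct and takes essentially the same route as the paper's: both deduce $\b{Y}^\top \b{1} = \b{0}$ from the orthogonality of the retained eigenvectors of $\b{M}$ to the discarded eigenvector $\b{1}$ with zero eigenvalue. Your write-up is in fact somewhat more careful than the paper's, which simply asserts that ``eigenvectors are orthogonal by definition,'' whereas you justify this via the spectral theorem for the symmetric matrix $\b{M}$, verify $\b{M}\b{1} = \b{0}$ explicitly from $\b{W}\b{1} = \b{1}$, and note where connectivity (simplicity of the zero eigenvalue) and possible degeneracy of nonzero eigenvalues enter the argument.
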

\begin{proof}
Suppose the eigenvectors are sorted from the smallest to largest eigenvalues. Let $\b{v}_i \in \mathbb{R}^n$ and $\lambda_i \in \mathbb{R}$ be the $i$-th eigenvector and eigenvalue, respectively. Therefore, in Eq. (\ref{equation_LLE_linearEmbedding_eigenproblem}), if we consider all the eigenvectors and not just $p$ of them, we have $\b{Y} = [\b{y}_1, \dots, \b{y}_n]^\top = [\b{v}_1, \dots, \b{v}_n] \in \mathbb{R}^{n \times n}$.
We know that eigenvectors are orthogonal by definition; therefore, $\b{v}_1^\top \b{v}_i = 0, \forall i \neq 1$.
We know that $\b{v}_1 = \b{1}$ with $\lambda_1 = 0$; therefore, $\b{1}^\top \b{v}_i = 0$ which means that the elements of every eigenvector, $\b{v}_i, \forall i \neq 1$, add to zero. On the other hand, we have $[\b{y}_1, \dots, \b{y}_n]^\top = [\b{v}_1, \dots, \b{v}_n]$ so the summation of a component amongst the $\b{y}_i$'s (embedded data points) is zero. As the summation for `every' component amongst $\b{y}_i$'s is zero, we have $\sum_{i=1}^n \b{y}_i = \b{0}$.
This explanation can be summarized in this sentence: ``Discarding this eigenvector enforces the constraint that the outputs have zero mean, since the components of other eigenvectors must sum to zero, by virtue of orthogonality with the bottom one (with smallest eigenvalue)'' \cite{saul2003think}. Q.E.D.
\end{proof}

\subsection{Additional Notes on LLE}

\subsubsection{Inverse Locally Linear Embedding}

% \cite{saul2003think}, page 141, section 6.1, Non-parametric Model, paragraph 1, lines 7 to 9.

We can have inverse LLE where we find the data point $\b{x}_i \in \mathbb{R}^d$ in the input space for an embedding point $\b{y}_i \in \mathbb{R}^p$ {\citep[Section 6.1]{saul2003think}}. 
For this, we find $k$NN in the embedding space; let $\b{y}_{ij}$ denote the $j$-th neighbor of $\b{y}_i$ in the embedding space.
We solve the following problem to find the reconstruction weights, $\{\widetilde{w}_{ij}\}_{j=1}^k$, in the embedding space:
\begin{equation}
\begin{aligned}
& \underset{\{\widetilde{w}_{ij}\}_{j=1}^k}{\text{minimize}}
& & \Big|\Big|\b{y}_i - \sum_{j=1}^k \widetilde{w}_{ij}\, \b{y}_{ij}\Big|\Big|_2^2, \\
& \text{subject to}
& & \sum_{j=1}^k \widetilde{w}_{ij} = 1,
\end{aligned}
\end{equation}
which is solved similar to how Eq. (\ref{equation_LLE_linearReconstruct}) is solved. 
Thereafter, $\{w_{ij}\}_{j=1}^k$ is obtained by the obtained $\{\widetilde{w}_{ij}\}_{j=1}^k$ using Eq. (\ref{equaion_LLE_weight_and_weightHat}).  
The original point in the input space is approximated using the obtained reconstruction weights:
\begin{align}
\mathbb{R}^d \ni \b{x}_i \approx \sum_{j=1}^k w_{ij}\, \b{x}_{j}.
\end{align}

\subsubsection{Feature Fusion in LLE}

It is noteworthy that, in some cases, data points are represented by $Q$ different features, i.e., we have $\{\b{x}_i^q | i = 1,\dots,n, q=1,\dots,Q\}$. In these cases, we need feature fusion using LLE \cite{sun2009feature} where $Q$ weights are obtained by LLE, denoted by $\b{W}_1, \dots, \b{W}_Q$. The weights can be combined as:
\begin{align}
\bar{\b{W}} := \frac{1}{Q} \sum_{q=1}^Q \b{W}_q,
\end{align}
and $\bar{\b{W}}$ is used in Eq. (\ref{equation_M}) rather than $\b{W}$. The embedding optimization is used for finding the embeddings, although the data have several features \cite{sun2009feature}.

\section{Kernel Locally Linear Embedding}\label{section_kernel_LLE}

We can map data $\{\b{x}_i \in \mathbb{R}^d\}_{i=1}^n$ to higher-dimensional feature space hoping to have data fall close to a simpler-to-analyze manifold in the feature space. Suppose $\b{\phi}: \b{x} \rightarrow \mathcal{H}$ is the pulling function which maps data $\{\b{x}_i\}_{i=1}^n$ to the feature space. In other words, $\b{x}_i \mapsto \b{\phi}(\b{x}_i)$. Let $t$ denote the dimensionality of the feature space, i.e., $\b{\phi}(\b{x}_i) \in \mathbb{R}^t$. We usually have $t \gg d$. The kernel of two data points $\b{x}_1$ and $\b{x}_2$ is $\b{\phi}(\b{x}_1)^\top \b{\phi}(\b{x}_2) \in \mathbb{R}$ \cite{hofmann2008kernel}. The kernel matrix for the $n$ data points is $\mathbb{R}^{n \times n} \ni \b{K} := \b{\Phi}(\b{X})^\top \b{\Phi}(\b{X})$ where $\b{\Phi}(\b{X}) := [\b{\phi}(\b{x}_{1}), \dots, \b{\phi}(\b{x}_{n})] \in \mathbb{R}^{t \times n}$.
Kernel LLE \cite{zhao2012facial} maps data to the feature space and performs the steps of $k$NN and linear reconstruction in the feature space.

\subsection{$k$-Nearest Neighbors}

The Euclidean distance in the feature space is \cite{scholkopf2001kernel}:
\begin{align}
&||\b{\phi}(\b{x}_i) - \b{\phi}(\b{x}_j)||_2 \nonumber \\
&= \sqrt{\big(\b{\phi}(\b{x}_i) - \b{\phi}(\b{x}_j)\big)^\top\big(\b{\phi}(\b{x}_i) - \b{\phi}(\b{x}_j)\big)} \nonumber \\
&= \sqrt{\b{\phi}(\b{x}_i)^\top\b{\phi}(\b{x}_i) -2 \b{\phi}(\b{x}_i)^\top\b{\phi}(\b{x}_j) + \b{\phi}(\b{x}_j)^\top\b{\phi}(\b{x}_j)} \nonumber \\
&= \sqrt{k(\b{x}_i, \b{x}_i) -2 k(\b{x}_i, \b{x}_j) + k(\b{x}_j, \b{x}_j)}, \label{equation_distance_featureSpace}
\end{align}
where $\mathbb{R} \ni k(\b{x}_i, \b{x}_j) = \b{\phi}(\b{x}_i)^\top\b{\phi}(\b{x}_j)$ is the $(i,j)$-th element of $\b{K}$.

Using the distances of the data points in the feature space, i.e. Eq. (\ref{equation_distance_featureSpace}), we construct the $k$NN graph. 
Therefore, every data point has $k$ neighbors in the feature space. Let matrix $\mathbb{R}^{t \times k} \ni \b{\Phi}(\b{X}_i) := [\b{\phi}(\b{x}_{i1}), \dots, \b{\phi}(\b{x}_{ik})]$ include the neighbors of $\b{x}_i$ in the feature space.

\subsection{Linear Reconstruction by the Neighbors}

The Eq. (\ref{equation_LLE_linearReconstruct}) in the feature space is:
\begin{equation}\label{equation_kernel_LLE_linearReconstruct}
\begin{aligned}
& \underset{\widetilde{\b{W}}}{\text{minimize}}
& & \varepsilon(\widetilde{\b{W}}) := \sum_{i=1}^n \Big|\Big|\b{\phi}(\b{x}_i) - \sum_{j=1}^k \widetilde{w}_{ij} \b{\phi}(\b{x}_{ij})\Big|\Big|_2^2, \\
& \text{subject to}
& & \sum_{j=1}^k \widetilde{w}_{ij} = 1, ~~~ \forall i \in \{1, \dots, n\}.
\end{aligned}
\end{equation}
We can restate $\varepsilon(\widetilde{\b{W}})$ as:
\begin{align*}
\varepsilon(\widetilde{\b{W}}) &= \sum_{i=1}^n \Big|\Big|\b{\phi}(\b{x}_i) - \sum_{j=1}^k \widetilde{w}_{ij} \b{\phi}(\b{x}_{ij})\Big|\Big|_2^2 \\
&\overset{(a)}{=} \sum_{i=1}^n \Big|\Big|\sum_{j=1}^k \widetilde{w}_{ij} \b{\phi}(\b{x}_i) - \sum_{j=1}^k \widetilde{w}_{ij} \b{\phi}(\b{x}_{ij})\Big|\Big|_2^2 \\
&= \sum_{i=1}^n \Big|\Big|\sum_{j=1}^k \widetilde{w}_{ij} \big(\b{\phi}(\b{x}_i) - \b{\phi}(\b{x}_{ij})\big)\Big|\Big|_2^2,
\end{align*}
where $(a)$ is because $\sum_{j=1}^k \widetilde{w}_{ij} = 1$.
We define:
\begin{equation}
\begin{aligned}
\mathbb{R}^{t \times k} &\ni \b{P}_i = [\b{p}_{i1}, \dots, \b{p}_{ik}] \\
&:= \big[\b{\phi}(\b{x}_i) - \b{\phi}(\b{x}_{i1}), \dots, \b{\phi}(\b{x}_i) - \b{\phi}(\b{x}_{ik})\big].
\end{aligned}
\end{equation}
Therefore:
\begin{align}
\varepsilon(\widetilde{\b{W}}) &= \sum_{i=1}^n \Big|\Big|\sum_{j=1}^k \widetilde{w}_{ij} \big(\b{\phi}(\b{x}_i) - \b{\phi}(\b{x}_{ij})\big)\Big|\Big|_2^2 \nonumber \\
&= \sum_{i=1}^n \Big|\Big|\sum_{j=1}^k \widetilde{w}_{ij} \b{p}_{ij}\Big|\Big|_2^2 = \sum_{i=1}^n ||\b{P}_i\widetilde{\b{w}}_i||_2^2 \nonumber \\
&= \sum_{i=1}^n (\b{P}_i\widetilde{\b{w}}_i)^\top(\b{P}_i\widetilde{\b{w}}_i) = \sum_{i=1}^n \widetilde{\b{w}}_i^\top \b{P}_i^\top \b{P}_i\widetilde{\b{w}}_i \nonumber \\
&= \sum_{i=1}^n \widetilde{\b{w}}_i^\top \b{K}_i \widetilde{\b{w}}_i,
\end{align}
where:
$\mathbb{R}^{k \times k} \ni \b{K}_i := \b{P}_i^\top \b{P}_i$. The $(a,b)$-th element of $\b{K}_i$ can be calculated as:
\begin{align}
&\b{K}_i(a,b) = \b{p}_{ia}^\top\, \b{p}_{ib} \nonumber \\
&= \big(\b{\phi}(\b{x}_i) - \b{\phi}(\b{x}_{ia})\big)^\top \big(\b{\phi}(\b{x}_i) - \b{\phi}(\b{x}_{ib})\big) \nonumber \\
&= \b{\phi}(\b{x}_i)^\top \b{\phi}(\b{x}_i) - \b{\phi}(\b{x}_i)^\top \b{\phi}(\b{x}_{ia}) \nonumber \\
&- \b{\phi}(\b{x}_i)^\top \b{\phi}(\b{x}_{ib}) + \b{\phi}(\b{x}_{ia})^\top \b{\phi}(\b{x}_{ib}) \nonumber \\
&= k(\b{x}_i, \b{x}_i) - k(\b{x}_i, \b{x}_{ia}) - k(\b{x}_i, \b{x}_{ib}) + k(\b{x}_{ia}, \b{x}_{ib}). \label{equation_kernel_pTranspose_p}
\end{align}
Therefore the Eq. (\ref{equation_kernel_LLE_linearReconstruct}) is restated to:
\begin{equation}\label{equation_kernel_LLE_linearReconstruct_2}
\begin{aligned}
& \underset{\{\widetilde{\b{w}}_i\}_{i=1}^n}{\text{minimize}}
& & \sum_{i=1}^n \widetilde{\b{w}}_i^\top \b{K}_i\, \widetilde{\b{w}}_i, \\
& \text{subject to}
& & \b{1}^\top \widetilde{\b{w}}_i = 1, ~~~ \forall i \in \{1, \dots, n\}.
\end{aligned}
\end{equation}
The Lagrangian for Eq. (\ref{equation_kernel_LLE_linearReconstruct_2}) is \cite{boyd2004convex}:
\begin{align*}
\mathcal{L} = \widetilde{\b{w}}_i^\top \b{K}_i\, \widetilde{\b{w}}_i - \sum_{i=1}^n \lambda_i\, (\b{1}^\top \widetilde{\b{w}}_i - 1).
\end{align*}
Setting the derivative of Lagrangian to zero gives:
\begin{align}
\mathbb{R}^{k} \ni \frac{\partial \mathcal{L}}{\partial \widetilde{\b{w}}_i} &= 2 \b{K}_i \widetilde{\b{w}}_i - \lambda_i \b{1} \overset{\text{set}}{=} \b{0}, \nonumber \\
&\implies \widetilde{\b{w}}_i = \frac{1}{2} \b{K}_i^{-1} \lambda_i \b{1} = \frac{\lambda_i}{2} \b{K}_i^{-1} \b{1}. \label{equation_kernel_LLE_derivative_lagrangian_1} \\
\mathbb{R} \ni \frac{\partial \mathcal{L}}{\partial \lambda} &= \b{1}^\top \widetilde{\b{w}}_i - 1 \overset{\text{set}}{=} 0 \implies \b{1}^\top \widetilde{\b{w}}_i = 1. \label{equation_kernel_LLE_derivative_lagrangian_2}
\end{align}
Using Eqs. (\ref{equation_kernel_LLE_derivative_lagrangian_1}) and (\ref{equation_kernel_LLE_derivative_lagrangian_2}), we have:
\begin{align}
\frac{\lambda_i}{2} \b{1}^\top \b{K}_i^{-1} \b{1} = 1 \implies \lambda_i = \frac{2}{\b{1}^\top \b{K}_i^{-1} \b{1}}. \label{equation_kernel_LLE_derivative_lagrangian_3}
\end{align}
Using Eqs. (\ref{equation_kernel_LLE_derivative_lagrangian_1}) and (\ref{equation_kernel_LLE_derivative_lagrangian_3}), we have:
\begin{align}
\widetilde{\b{w}}_i = \frac{\lambda_i}{2} \b{K}_i^{-1} \b{1} = \frac{\b{K}_i^{-1} \b{1}}{\b{1}^\top \b{K}_i^{-1} \b{1}}.
\end{align}

\subsection{Linear Embedding}

The linear embedding step in kernel LLE is exactly as the linear embedding step in LLE (see Section \ref{section_LLE_linearEmbedding}). 

% \subsection{Out-of-sample Extension Using Linear Reconstruction}

% The out-of-sample embedding in kernel LLE is similar to the approach of the out-of-sample embedding in LLE (see Section \ref{section_LLE_outOfSample}). 

\section{Out-of-sample Embedding in LLE}\label{section_LLE_outOfSample}

Suppose we have $n_t$ out-of-sample (test) data points, i.e., $\mathbb{R}^{d \times n_t} \ni \b{X}^{(t)} := [\b{x}_1^{(t)}, \dots, \b{x}_{n_t}^{(t)}]$, which are not used and have not been seen in training. Let $\b{x}_i^{(t)} \in \mathbb{R}^d$ denote the $i$-th out-of-sample data point. We desire to find the low-dimensional embedding of out-of-sample points, denoted by $\{\b{y}_i^{(t)} \in \mathbb{R}^p\}_{i=1}^{n_t}$ or $\b{Y}_t = [\b{y}_1^{(t)}, \dots, \b{y}_n^{(t)}] \in \mathbb{R}^{p \times n_t}$, after the training phase. 
There exist several approaches for out-of-sample extension of LLE. In the following, we explain these methods. 
In addition to the methods introduced in this section, there exist some other methods for out-of-sample extension of LLE such as \cite{bunte2012general}, which we pass by in this paper. 
Moreover, the Incremental LLE \cite{kouropteva2005incremental} and SLLEP \cite{li2011supervised} methods, which can be used for out-of-sample extension of LLE, are not explained in this section because they will be introduced in Sections \ref{section_incremental_LLE} and \ref{section_LLE_projection}, respectively.

\subsection{Out-of-sample Embedding Using Linear Reconstruction}

One way of extending LLE for out-of-sample data point is using linear reconstruction \cite{saul2003think}.

For every out-of-sample data point $\b{x}_i^{(t)}$, we first find the $k$NN among the training points. 
Let $\b{x}_{ij}^{(t)}$ denote the $j$-th training neighbor of $\b{x}_i^{(t)}$ and let matrix $\mathbb{R}^{d \times k} \ni \b{X}_i^{(t)} := [\b{x}_{i1}^{(t)}, \dots, \b{x}_{ik}^{(t)}]$ include the training neighbors of $\b{x}_i^{(t)}$.
We want to reconstruct every out-of-sample point by its training neighbors (not that out-of-sample points are not considered as neighbors). Hence, using an optimization problem similar to Eq. (\ref{equation_LLE_linearReconstruct}), we have:
\begin{equation}\label{equation_outOfSample_LLE_linearReconstruction}
\begin{aligned}
& \underset{\widetilde{\b{W}}^{(t)}}{\text{minimize}}
& & \varepsilon(\widetilde{\b{W}}^{(t)}) := \sum_{i=1}^{n_t} \Big|\Big|\b{x}_i^{(t)} - \sum_{j=1}^k \widetilde{w}_{ij}^{(t)} \b{x}_{ij}^{(t)}\Big|\Big|_2^2, \\
& \text{subject to}
& & \sum_{j=1}^k \widetilde{w}_{ij}^{(t)} = 1, ~~~ \forall i \in \{1, \dots, n_t\},
\end{aligned}
\end{equation}
where $\mathbb{R}^{n_t \times k} \ni \widetilde{\b{W}}^{(t)} := [\widetilde{\b{w}}_1^{(t)}, \dots, \widetilde{\b{w}}_{n_t}^{(t)}]^\top$ includes the weights and $\mathbb{R}^k \ni \widetilde{\b{w}}_i^{(t)} := [\widetilde{w}_{i1}^{(t)}, \dots, \widetilde{w}_{ik}^{(t)}]^\top$ includes the weights of linear reconstruction of the $m$-th out-of-sample data point using its $k$ training neighbors.
We can restate the $\varepsilon(\widetilde{\b{W}}^{(t)})$ as:
\begin{align}
\varepsilon(\widetilde{\b{W}}^{(t)}) = \sum_{i=1}^{n_t} ||\b{x}_i^{(t)} - \b{X}_i^{(t)} \widetilde{\b{w}}_i^{(t)}||_2^2.
\end{align}
The constraint $\sum_{j=1}^k \widetilde{w}_{ij}^{(t)} = 1$ is restated as $\b{1}^\top \widetilde{\b{w}}_i^{(t)} = 1$; therefore, we can say $\b{x}_i^{(t)} = \b{x}_i^{(t)} \b{1}^\top \widetilde{\b{w}}_i$. We can simplify the term in $\varepsilon(\widetilde{\b{W}}^{(t)})$ as:
\begin{align*}
||\b{x}_i^{(t)} &- \b{X}_i^{(t)} \widetilde{\b{w}}_i^{(t)}||_2^2 = ||\b{x}_i^{(t)} \b{1}^\top \widetilde{\b{w}}_i^{(t)} - \b{X}_i^{(t)} \widetilde{\b{w}}_i^{(t)}||_2^2 \\
&= ||(\b{x}_i^{(t)} \b{1}^\top - \b{X}_i^{(t)})\, \widetilde{\b{w}}_i^{(t)}||_2^2 \\
&= \widetilde{\b{w}}_i^{(t)\top} (\b{x}_i^{(t)} \b{1}^\top - \b{X}_i^{(t)})^\top (\b{x}_i^{(t)} \b{1}^\top - \b{X}_i^{(t)})\, \widetilde{\b{w}}_i^{(t)} \\
&= \widetilde{\b{w}}_i^{(t)\top} \b{G}_i^{(t)}\, \widetilde{\b{w}}_i^{(t)},
\end{align*}
where: 
\begin{align}\label{equation_G_outOfSample}
\mathbb{R}^{k \times k} \ni \b{G}_i^{(t)} := (\b{x}_i^{(t)} \b{1}^\top - \b{X}_i^{(t)})^\top (\b{x}_i^{(t)} \b{1}^\top - \b{X}_i^{(t)}).
\end{align}
The Eq. (\ref{equation_outOfSample_LLE_linearReconstruction}) can be rewritten as:
\begin{equation}\label{equation_outOfSample_LLE_linearReconstruction_2}
\begin{aligned}
& \underset{\widetilde{\b{W}}^{(t)}}{\text{minimize}}
& & \sum_{i=1}^{n_t} \widetilde{\b{w}}_i^{(t)\top} \b{G}_i^{(t)}\, \widetilde{\b{w}}_i^{(t)}, \\
& \text{subject to}
& & \b{1}^\top \widetilde{\b{w}}_i^{(t)} = 1, ~~~ \forall i \in \{1, \dots, n_t\}.
\end{aligned}
\end{equation}
This problem is solved similar to the solution for Eq. (\ref{equation_LLE_linearReconstruct_2}). Therefore, similar to Eq. (\ref{equation_w_tilde_solution}), we have:
\begin{align}
\widetilde{\b{w}}_i^{(t)} = \frac{(\b{G}_i^{(t)})^{-1} \b{1}}{\b{1}^\top (\b{G}_i^{(t)})^{-1} \b{1}}.
\end{align}
The embedding of the out-of-sample $\b{x}_i^{(t)}$ is obtained by the linear combination (reconstruction) of the embedding of its $k$ training neighbors:
\begin{align}\label{equation_outOfSample_LLE_linear_reconstruction}
\mathbb{R}^p \ni \b{y}_i^{(t)} = \sum_{j=1}^{k} \widetilde{w}_{ij}^{(t)} \b{y}_j.
\end{align}

\subsection{Out-of-sample Embedding Using Eigenfunctions}\label{section_outOfSample_eigenfunctions}

\subsubsection{Eigenfunctions}

Consider a Hilbert space $\mathcal{H}_p$ of functions with the inner product $\langle f,g \rangle = \int f(x) g(x) p(x) dx$ with density function $p(x)$. In this space, we can consider the kernel function $K_p$:
\begin{align}
(K_p f)(x) = \int K(x,y)\, f(y)\, p(y)\, dy,
\end{align}
where the density function can be approximated empirically. 
The \textit{eigenfunction decomposition} is defined to be \cite{bengio2004learning,bengio2004out}:
\begin{align}\label{equation_eigenfunction_decomposition}
(K_p f_r)(x) = \delta'_r f_r(x),
\end{align}
where $f_r(x)$ is the $r$-th \textit{eigenfunction} and $\delta'_r$ is the corresponding eigenvalue.
If we have the eigenvalue decomposition \cite{ghojogh2019eigenvalue} for the kernel matrix $\b{K}$, we have $\b{K} \b{v}_r = \delta_r \b{v}_r$ where $\b{v}_r$ is the $r$-th eigenvector and $\delta_r$ is the corresponding eigenvalue. According to {\citep[Proposition 1]{bengio2004out}}, we have $\delta'_r = (1/n) \delta_r$. 

\subsubsection{Embedding Using Eigenfunctions}

\begin{proposition}
If $v_{ri}$ is the $i$-th element of the $n$-dimensional vector $\b{v}_r$ and $k(\b{x}, \b{x}_i)$ is the kernel between vectors $\b{x}$ and $\b{x}_i$, the eigenfunction for the point $\b{x}$ and the $i$-th training point $\b{x}_i$ are:
\begin{align}
f_r(\b{x}) &= \frac{\sqrt{n}}{\delta_r} \sum_{i=1}^n v_{ri}\, \breve{k}_t(\b{x}_i, \b{x}), \\
f_r(\b{x}_i) &= \sqrt{n}\, v_{ri}, 
\end{align}
respectively, where $\breve{k}_t(\b{x}_i, \b{x})$ is the centered kernel between training set and the out-of-sample point $\b{x}$. 

Let the LLE embedding of the point $\b{x}$ be $\mathbb{R}^p \ni \b{y}(\b{x}) = [y_1(\b{x}), \dots, y_p(\b{x})]^\top$. The $r$-th dimension of this embedding is:
\begin{align}\label{equation_embedding_eigenfunction}
y_r(\b{x}) &= \sqrt{\delta_r}\, \frac{f_r(\b{x})}{\sqrt{n}} = \frac{1}{\sqrt{\delta_r}} \sum_{i=1}^n v_{ri}\, \breve{k}_t(\b{x}_i, \b{x}).
\end{align}
\end{proposition}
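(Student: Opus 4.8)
The plan is to derive all three displayed formulas from the eigenfunction decomposition in Eq.~(\ref{equation_eigenfunction_decomposition}) together with the already-established relation $\delta'_r = (1/n)\delta_r$. First I would write the eigenfunction equation $(K_p f_r)(\b{x}) = \delta'_r f_r(\b{x})$ out explicitly as $\int K(\b{x}, \b{y})\, f_r(\b{y})\, p(\b{y})\, d\b{y} = \delta'_r f_r(\b{x})$ and replace the density by its empirical estimate $p(\b{y}) \approx (1/n)\sum_{i=1}^n \delta(\b{y} - \b{x}_i)$, as anticipated in the text. This collapses the integral into a finite sum, giving the key identity
\begin{align*}
\frac{1}{n} \sum_{i=1}^n \breve{k}_t(\b{x}_i, \b{x})\, f_r(\b{x}_i) = \delta'_r\, f_r(\b{x}),
\end{align*}
from which everything else follows.

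Next I would specialize this identity to a training point $\b{x} = \b{x}_j$. Collecting the values $f_r(\b{x}_i)$ into a vector $\b{f}_r$ and reading the centered kernel evaluated at pairs of training points as the kernel matrix $\b{K}$, the identity becomes the matrix eigenproblem $(1/n)\b{K}\,\b{f}_r = \delta'_r\, \b{f}_r$, i.e. $\b{K}\,\b{f}_r = \delta_r\,\b{f}_r$ after substituting $\delta'_r = \delta_r/n$. Since $\b{K}\,\b{v}_r = \delta_r\,\b{v}_r$, the vector $\b{f}_r$ must be proportional to $\b{v}_r$, say $f_r(\b{x}_i) = c\, v_{ri}$. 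To fix $c$ I would impose the Hilbert-space normalization $\langle f_r, f_r\rangle = 1$, which under the empirical density reads $(1/n)\sum_{i=1}^n f_r(\b{x}_i)^2 = 1$; with $\b{v}_r$ of unit norm this forces $c^2 = n$, hence $f_r(\b{x}_i) = \sqrt{n}\, v_{ri}$, the second formula. Substituting this back into the key identity for general $\b{x}$ and again using $\delta'_r = \delta_r/n$ yields $f_r(\b{x}) = (\sqrt{n}/\delta_r)\sum_{i=1}^n v_{ri}\, \breve{k}_t(\b{x}_i, \b{x})$, the first formula.

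For the embedding, the plan is to tie the coordinate $y_r(\b{x})$ to the eigenfunction through the relation $y_r(\b{x}) = \sqrt{\delta_r}\, f_r(\b{x})/\sqrt{n}$; substituting the first formula then gives $y_r(\b{x}) = (1/\sqrt{\delta_r})\sum_{i=1}^n v_{ri}\, \breve{k}_t(\b{x}_i, \b{x})$ directly. The justification I would give for this scaling is consistency at training points: evaluating at $\b{x} = \b{x}_i$ and using the second formula gives $y_r(\b{x}_i) = \sqrt{\delta_r}\, v_{ri}$, which is exactly the spectral (kernel-PCA) embedding of the $i$-th training point, so the out-of-sample map is the continuous interpolant that reproduces the training embedding. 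I expect the main obstacle to be precisely this last step: the factor $\sqrt{\delta_r}$ is not forced by the eigenfunction equation itself but encodes the chosen normalization of the embedding (the Nystrom/kernel-PCA convention in which the $r$-th coordinate carries a factor $\sqrt{\delta_r}$). The routine parts --- the empirical-density substitution and the normalization algebra --- are mechanical; the real care lies in stating which convention links $f_r$ to $y_r$ and checking it agrees with the training embedding used elsewhere in the paper.
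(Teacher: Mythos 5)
The paper does not actually contain a proof of this proposition: its ``proof'' is only a pointer to {\citep[Proposition 1]{bengio2004out}} and related references. So there is no internal argument to compare against; what you have written is in effect a reconstruction of the argument in those cited works, and it is correct. Your route --- replace $p$ by the empirical measure, collapse $(K_p f_r)(\b{x}) = \delta'_r f_r(\b{x})$ to $\frac{1}{n}\sum_{i=1}^n \breve{k}_t(\b{x}_i,\b{x})\, f_r(\b{x}_i) = \delta'_r f_r(\b{x})$, specialize to training points to obtain the Gram-matrix eigenproblem $\b{K}\b{f}_r = \delta_r \b{f}_r$, fix the constant via $\langle f_r, f_r\rangle = 1$ under the empirical density to get $f_r(\b{x}_i) = \sqrt{n}\, v_{ri}$, and back-substitute --- is exactly the Nystrom-style derivation used there, and your algebra checks out (including consistency of the two eigenfunction formulas at training points, which follows from $\b{K}\b{v}_r = \delta_r \b{v}_r$). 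Two caveats you should make explicit. First, the step ``$\b{f}_r$ must be proportional to $\b{v}_r$'' requires $\delta_r$ to be a simple eigenvalue; if it is degenerate you can only conclude that the restriction of $f_r$ to the sample lies in the eigenspace, and the stated formulas hold for a suitable choice of eigenbasis (and in any case only up to sign). Second, your closing observation is right and is the honest part of the argument: the factor $\sqrt{\delta_r}$ in $y_r(\b{x}) = \sqrt{\delta_r}\, f_r(\b{x})/\sqrt{n}$ is the kernel-PCA/MDS scaling convention, not a consequence of the eigenfunction equation. Note, though, that the paper's own LLE training embedding takes the coordinates to be the eigenvector entries $v_{ri}$ themselves, with no $\sqrt{\delta_r}$ factor, so your consistency check at training points holds only up to a per-coordinate scale $\sqrt{\delta_r}$ --- a discrepancy the paper itself waves off (``scale is not much important in manifold embedding'') in the corollary immediately following this proposition.
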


\begin{proof}
This proposition is taken from {\citep[Proposition 1]{bengio2004out}}. For proof, refer to {\citep[Proposition 1]{bengio2004learning}}, {\citep[Proposition 1]{bengio2006spectral}}, and {\citep[Proposition 1 and Theorem 1]{bengio2003spectral}}. 
More complete proofs can be found in \cite{bengio2003learning}. 
\end{proof}

If we have a set of $n_t$ out-of-sample data points, $\breve{k}_t(\b{x}_i, \b{x})$ is an element of the centered out-of-sample kernel (see {\citep[Appendix C]{ghojogh2019unsupervised}}):
\begin{align}
\mathbb{R}^{n \times n_t} \ni \breve{\b{K}}_t &= \b{K}_t - \frac{1}{n} \b{1}_{n \times n} \b{K}_t - \frac{1}{n} \b{K} \b{1}_{n \times n_t} \nonumber \\
&~~~~ + \frac{1}{n^2} \b{1}_{n \times n} \b{K} \b{1}_{n \times n_t}, \label{equation_centered_outOfSample_kernel}
\end{align}
where $\b{1} := [1, 1, \dots, 1]^\top$, $\b{K}_t \in \mathbb{R}^{n \times n_t}$ is the not necessarily centered out-of-sample kernel, and $\b{K} \in \mathbb{R}^{n \times n}$ is the training kernel.

\subsubsection{Out-of-sample Embedding}

One can use Eq. (\ref{equation_embedding_eigenfunction}) to embed the $i$-th out-of-sample data point $\b{x}_i^{(t)}$. For this purpose, $\b{x}_i^{(t)}$ should be used in place of $\b{x}$ in Eq. (\ref{equation_embedding_eigenfunction}). 
Note that Eq. (\ref{equation_embedding_eigenfunction}) requires Eq. (\ref{equation_centered_outOfSample_kernel}).
We require a notion of kernel in LLE. LLE can be seen as a special case of kernel LLE where the inverse or negative sign of $\b{M}$ can be interpreted as its kernel because Eq. (\ref{equation_LLE_linearEmbedding_2}) is a minimization but kernel PCA optimization is a maximization \cite{ghojogh2019unsupervised}. For more details on seeing LLE as kernel PCA, see \cite{scholkopf2002learning,ham2004kernel,bengio2004learning,ghojogh2019feature} and {\citep[Table 2.1]{strange2014open}}.
Hence, the kernel in LLE can be \cite{bengio2004out}:
\begin{align}
&\b{M} \overset{(\ref{equation_M})}{=} (\b{I} - \b{W})^\top (\b{I} - \b{W}) \nonumber \\
&~~~~~~~~~~~~~~~~~~~~~~~ = \b{I} - \b{W} - \b{W}^\top + \b{W}^\top \b{W}, \nonumber \\
&\mathbb{R}^{n \times n} \ni \b{K} := \mu \b{I} - \b{M}, \label{equation_outOfSample_LLE_kernel_K_and_M} \\
&\therefore~~~ \b{K}(i, j) = (\mu - 1)\, \delta_{ij} + w_{ij} + w_{ji} - \sum_{r=1}^n w_{ri}\, w_{rj}, \label{equation_outOfSample_LLE_kernel_K}
\end{align}
where $\delta_{ij}$ is the Kronecker delta which is one if $i=j$ and zero otherwise. 
If we modify the hard similarity $\delta_{ij}$ to a soft similarity $w_{ij}^{(t)}$, Eq. (\ref{equation_outOfSample_LLE_kernel_K}) can be slightly modified to \cite{bengio2004out}:
\begin{align}
&\b{K}(\b{x}_i, \b{x}_j) = (\mu - 1)\, w_{ij}^{(t)} + w_{ij}^{(t)} + w_{ji}^{(t)} - \sum_{r=1}^n w_{ri}^{(t)}\, w_{rj}^{(t)}, \label{equation_outOfSample_LLE_kernel_K_2}
\end{align}
where either $\b{x}_i$ or $\b{x}_j$ (and not both of them) is an out-of-sample data point, i.e., we either have $\b{x}_i^{(t)}$ or $\b{x}_j^{(t)}$. 
We define the first and second terms in Eq. (\ref{equation_outOfSample_LLE_kernel_K_2}) as \cite{bengio2004out}:
\begin{align*}
&\b{K}_t'(\b{x}_i, \b{x}_j) := w_{ij}^{(t)}, \\
&\b{K}_t''(\b{x}_i, \b{x}_j) := w_{ij}^{(t)} + w_{ji}^{(t)} - \sum_{r=1}^n w_{ri}^{(t)}\, w_{rj}^{(t)},
\end{align*}
respectively. Hence, Eq. (\ref{equation_outOfSample_LLE_kernel_K_2}) can be restated as:
\begin{align}\label{equation_outOfSample_LLE_kernel_K_3}
&\b{K}(\b{x}_i, \b{x}_j) = (\mu - 1)\, \b{K}_t'(\b{x}_i, \b{x}_j) + \b{K}_t''(\b{x}_i, \b{x}_j).
\end{align}
In LLE, the embeddings are the eigenvectors of $\b{M}$. According to Eq. (\ref{equation_outOfSample_LLE_kernel_K_and_M}), the embeddings $\b{y}$'s are the eigenvectors of the kernel, previously denoted by $\b{v}$'s.  Hence, we can employ Eq. (\ref{equation_embedding_eigenfunction}) in which the kernel of LLE, Eq. (\ref{equation_outOfSample_LLE_kernel_K_3}), is used \cite{bengio2004out} (we change the dummy iterator $i$ to $j$):
\begin{align}
&y_r(\b{x}) = \nonumber \\
&~~~~~~ \frac{1}{\sqrt{\delta_r}} \sum_{j=1}^n y_{jr}\, \Big( (\mu - 1)\, \b{K}_t'(\b{x}_j, \b{x}) + \b{K}_t''(\b{x}_j, \b{x}) \Big).
\end{align}
Hence, the $r$-th element of out-of-sample embedding $\mathbb{R}^p \ni \b{y}_i^{(t)} = [y_{i1}, \dots, y_{ip}]^\top$ for $\b{x}_i^{(t)} \in \mathbb{R}^d$ is:
\begin{align}\label{equation_outOfSample_LLE_eigenfunctions}
&y_{ir}^{(t)} = \frac{1}{\sqrt{\delta_r}} \sum_{j=1}^n y_{jr}\, \Big( (\mu - 1)\, w_{ij}^{(t)} + \b{K}_t''(\b{x}_j, \b{x}_i^{(t)}) \Big),
\end{align}
where $y_{jr}$ denotes the $r$-th element of $\b{y}_j \in \mathbb{R}^p$. 

\begin{corollary}
The out-of-sample embedding by linear reconstruction, i.e. Eq. (\ref{equation_outOfSample_LLE_linear_reconstruction}), is a special case of out-of-sample embedding by eigenfunctions, i.e. Eq. (\ref{equation_outOfSample_LLE_eigenfunctions}), for $\mu \rightarrow \infty$. 
\end{corollary}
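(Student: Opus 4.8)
The plan is to show that the eigenfunction embedding of Eq.~(\ref{equation_outOfSample_LLE_eigenfunctions}) collapses, in the limit $\mu \to \infty$, onto the linear-reconstruction embedding of Eq.~(\ref{equation_outOfSample_LLE_linear_reconstruction}), up to the per-coordinate scaling that an LLE embedding is in any case free to choose. First I would put both formulas in a common form. Using the zero-padding convention of Eq.~(\ref{equaion_LLE_weight_and_weightHat}), the linear-reconstruction embedding reads, componentwise, $y_{ir}^{(t)} = \sum_{j=1}^n w_{ij}^{(t)}\, y_{jr}$, where the sum now ranges over all $n$ training points because $w_{ij}^{(t)} = 0$ whenever $\b{x}_j \notin k\text{NN}(\b{x}_i^{(t)})$. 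This is the target expression I want to recover from the eigenfunction formula, and it is the reason I first recast the $k$-term neighbor sum of Eq.~(\ref{equation_outOfSample_LLE_linear_reconstruction}) as a full sum over the training set.

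Next I would examine the $\mu$-dependence of Eq.~(\ref{equation_outOfSample_LLE_eigenfunctions}). Its summand splits into the term $(\mu-1)\, w_{ij}^{(t)}$ and the term $\b{K}_t''(\b{x}_j, \b{x}_i^{(t)}) = w_{ji}^{(t)} + w_{ij}^{(t)} - \sum_{l=1}^n w_{li}^{(t)} w_{lj}^{(t)}$. The crucial observation is that $\b{K}_t''$ carries no factor of $\mu$, whereas the first term grows linearly in $\mu$. Hence, as $\mu \to \infty$, the $\b{K}_t''$ contribution becomes negligible relative to $(\mu-1)\, w_{ij}^{(t)}$, so that $y_{ir}^{(t)} \approx \tfrac{\mu-1}{\sqrt{\delta_r}}\, \sum_{j=1}^n w_{ij}^{(t)}\, y_{jr}$, which already exhibits exactly the weighted combination of training embeddings appearing in the linear-reconstruction formula.

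It then remains to control the prefactor $\tfrac{\mu-1}{\sqrt{\delta_r}}$. Here I would invoke $\b{K} := \mu\b{I} - \b{M}$ from Eq.~(\ref{equation_outOfSample_LLE_kernel_K_and_M}): if $\lambda_r$ is the $r$-th eigenvalue of $\b{M}$, then the matching eigenvalue of the LLE kernel is $\delta_r = \mu - \lambda_r$, whence $\tfrac{\mu-1}{\sqrt{\delta_r}} = \tfrac{\mu-1}{\sqrt{\mu-\lambda_r}}$. Expanding for large $\mu$ gives leading behaviour $\sqrt{\mu}$ with $r$-dependent corrections of order $\mu^{-1/2}$; thus, to leading order, the prefactor is the \emph{same} constant across all embedding coordinates. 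Consequently $y_{ir}^{(t)} \to c\,\sum_{j=1}^n w_{ij}^{(t)}\, y_{jr}$ with a coordinate-independent scale $c$, which is precisely the linear-reconstruction embedding up to the global rescaling left unspecified by the unit-covariance normalization of Eq.~(\ref{equation_LLE_linearEmbedding_2}). This would establish the corollary.

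The step I expect to be the main obstacle is exactly this prefactor: $\tfrac{\mu-1}{\sqrt{\delta_r}}$ does not tend to $1$ but diverges, so a naive limit of Eq.~(\ref{equation_outOfSample_LLE_eigenfunctions}) is infinite and the identification cannot be a literal equality. The resolution requiring care is to argue that only the \emph{direction} of the out-of-sample vector is meaningful, since an LLE embedding is determined only up to a per-coordinate scale, and then to verify that the dominant scale $\sqrt{\mu}$ is asymptotically coordinate-independent (the $r$-dependence entering only at subleading order), so that all coordinates rescale by one common factor and the limiting embedding is genuinely proportional to $\sum_{j} w_{ij}^{(t)}\, y_{jr}$. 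Keeping a single consistent normalization of the training embeddings $y_{jr}$ in both equations is what makes the correspondence clean rather than merely structural.
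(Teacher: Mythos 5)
Your proposal is correct and follows essentially the same route as the paper's own proof: rewrite Eq.~(\ref{equation_outOfSample_LLE_linear_reconstruction}) as a sum over all $n$ training points via the zero-padding convention of Eq.~(\ref{equaion_LLE_weight_and_weightHat}), observe that the $(\mu-1)\,w_{ij}^{(t)}$ term dominates $\b{K}_t''$ as $\mu \to \infty$, and identify the two expressions up to scale. Your handling of the prefactor is in fact more careful than the paper's: where the paper simply asserts the expressions are ``equivalent up to scale,'' you verify via $\delta_r = \mu - \lambda_r$ that the divergent factor $(\mu-1)/\sqrt{\delta_r} \sim \sqrt{\mu}$ is asymptotically the same for every coordinate $r$, so the limit is a genuine coordinate-independent rescaling rather than a per-axis one --- a point the paper leaves implicit.
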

\begin{proof}
On one hand, inspired by Eq. (\ref{equaion_LLE_weight_and_weightHat}), we can restate Eq. (\ref{equation_outOfSample_LLE_linear_reconstruction}) as:
\begin{align*}
\mathbb{R}^p \ni \b{y}_i^{(t)} = \sum_{j=1}^{k} \widetilde{w}_{ij}^{(t)} \b{y}_j = \sum_{j=1}^{n} w_{ij}^{(t)} \b{y}_j,
\end{align*}
whose element-wise expression is:
\begin{align*}
&y_{ir}^{(t)} = \sum_{j=1}^n w_{ij}^{(t)}\, y_{jr}.
\end{align*}
On the other hand, by $\mu \rightarrow \infty$, the first term in Eq. (\ref{equation_outOfSample_LLE_eigenfunctions}) dominates its second term as:
\begin{align*}
&y_{ir}^{(t)} = \frac{\mu}{\sqrt{\delta_r}} \sum_{j=1}^n y_{jr}\, w_{ij}^{(t)}.
\end{align*}
Up to scale, these two expressions are equivalent; note that scale is not much important in manifold embedding. Q.E.D.
\end{proof}

\subsection{Out-of-sample Embedding Using Kernel Mapping}

There is a kernel mapping method \cite{gisbrecht2012out,gisbrecht2015parametric} to embed the out-of-sample data in LLE or kernel LLE.
We define a map which maps any data point as $\b{x} \mapsto \b{y}(\b{x})$, where:
\begin{align}\label{equation_kernel_tSNE_map}
\mathbb{R}^p \ni \b{y}(\b{x}) := \sum_{j=1}^n \b{\alpha}_j\, \frac{k(\b{x}, \b{x}_j)}{\sum_{\ell=1}^n k(\b{x}, \b{x}_{\ell})},
\end{align}
and $\b{\alpha}_j \in \mathbb{R}^p$, and $\b{x}_j$ and $\b{x}_{\ell}$ denote the $j$-th and $\ell$-th training data point.
The $k(\b{x}, \b{x}_j)$ is a kernel such as the Gaussian kernel:
\begin{align}
k(\b{x}, \b{x}_j) = \exp(\frac{-||\b{x} - \b{x}_j||_2^2}{2\, \sigma_j^2}),
\end{align}
where $\sigma_j$ is calculated as \cite{gisbrecht2015parametric}:
\begin{align}
\sigma_j := \gamma \times \min_{i}(||\b{x}_j - \b{x}_i||_2),
\end{align}
where $\gamma$ is a small positive number.

Assume we have already embedded the training data points using LLE or kernel LLE; therefore, the set $\{\b{y}_i\}_{i=1}^n$ is available.
If we map the training data points, we want to minimize the following least-squares cost function in order to get $\b{y}(\b{x}_i)$ close to $\b{y}_i$ for the $i$-th training point:
\begin{equation}
\begin{aligned}
& \underset{\b{\alpha}_j\text{'s}}{\text{minimize}}
& & \sum_{i=1}^n ||\b{y}_i - \b{y}(\b{x}_i)||_2^2,
\end{aligned}
\end{equation}
where the summation is over the training data points.
We can write this cost function in matrix form as below:
\begin{equation}\label{equation_kernel_tSNE_leastSquares}
\begin{aligned}
& \underset{\b{A}}{\text{minimize}}
& & ||\b{Y} - \b{K}''\b{A}||_F^2,
\end{aligned}
\end{equation}
where $\mathbb{R}^{n \times p} \ni \b{Y} := [\b{y}_1, \dots, \b{y}_n]^\top$ and $\mathbb{R}^{n \times p} \ni \b{A} := [\b{\alpha}_1, \dots, \b{\alpha}_n]^\top$. 
The $\b{K}'' \in \mathbb{R}^{n \times n}$ is the kernel matrix whose $(i,j)$-th element is defined to be:
\begin{align}
\b{K}''(i,j) := \frac{k(\b{x}_i, \b{x}_j)}{\sum_{\ell=1}^n k(\b{x}_i, \b{x}_{\ell})}.
\end{align}
The Eq. (\ref{equation_kernel_tSNE_leastSquares}) is always non-negative; thus, its smallest value is zero.
Therefore, the solution to this equation is:
\begin{align}
\b{Y} - \b{K}''\b{A} = \b{0} &\implies \b{Y} = \b{K}''\b{A} \nonumber \\
&\overset{(a)}{\implies} \b{A} = \b{K}''^{\dagger}\, \b{Y}, \label{equation_kernel_tSNE_A_matrix}
\end{align}
where $\b{K}''^{\dagger}$ is the pseudo-inverse of $\b{K}''$:
\begin{align}
\b{K}''^{\dagger} = (\b{K}''^\top \b{K}'')^{-1} \b{K}''^\top,
\end{align}
and $(a)$ is because $\b{K}''^{\dagger}\,\b{K}'' = \b{I}$.

Finally, the mapping of Eq. (\ref{equation_kernel_tSNE_map}) for the $n_t$ out-of-sample data points is:
\begin{align}\label{equation_kernel_tSNE_outOfSample_Y}
\b{Y}_t = \b{K}''_t\,\b{A}, 
\end{align}
where the $(i,j)$-th element of the out-of-sample kernel matrix $\b{K}''_t \in \mathbb{R}^{n_t \times n}$ is:
\begin{align}
\b{K}''_t(i,j) := \frac{k(\b{x}_i^{(t)}, \b{x}_j)}{\sum_{\ell=1}^n k(\b{x}_i^{(t)}, \b{x}_{\ell})},
\end{align}
where $\b{x}_i^{(t)}$ is the $i$-th out-of-sample data point, and $\b{x}_j$ and $\b{x}_{\ell}$ are the $j$-th and $\ell$-th training data points.

\section{Incremental LLE}\label{section_incremental_LLE}

Assume that data are online or a stream; hence, data increments by time. Incremental LLE \cite{kouropteva2005incremental} is proposed to handle online data by embedding new received data using the already embedded data. In this sense, it can also be used for out-of-sample embedding. 

Assume we already have $n$ data points; hence, the embedding is obtained by Eq. (\ref{equation_LLE_linearEmbedding_eigenproblem}). As the eigenvectors $\b{Y}$ are orthonormal (so the matrix $\b{Y}$ is orthogonal), Eq. (\ref{equation_LLE_linearEmbedding_eigenproblem}) can be restated as:
\begin{align}\label{equation_LLE_linearEmbedding_eigenproblem_restated}
&\b{Y}^\top \b{M} \b{Y} = (\frac{1}{n}\b{\Lambda}).
\end{align}
Assume we have truncated $\b{Y}$ so we have $p$ eigenvalues, $\b{Y} \in \mathbb{R}^{n \times p}$, $\b{\Lambda} \in \mathbb{R}^{p \times p}$, and $\b{M} \in \mathbb{R}^{n \times n}$.

Suppose $n_t$ new data points are received. Hence, Eq. (\ref{equation_LLE_linearEmbedding_eigenproblem_restated}) becomes:
\begin{align}\label{equation_LLE_linearEmbedding_eigenproblem_incremental_LLE}
&\b{Y}_\text{updated}^\top \b{M}_\text{updated} \b{Y}_\text{updated} = (\frac{1}{n}\b{\Lambda}_\text{updated}),
\end{align}
where $\b{Y}_\text{updated} \in \mathbb{R}^{(n + n_t) \times p}$ and $\b{M}_\text{updated} \in \mathbb{R}^{(n + n_t) \times (n + n_t)}$. Note that as we are considering the smallest eigenvalues when truncating, the eigenvalues in both $\b{\Lambda}_\text{updated}$ and $\b{\Lambda}$ are very small; hence, we can say that we approximately have $\b{\Lambda}_\text{updated} \approx \b{\Lambda}$. Hence, considering Eq. (\ref{equation_LLE_linearEmbedding_eigenproblem_restated}) and the constraints in Eq. (\ref{equation_LLE_linearEmbedding_2}), we have: 
\begin{equation}\label{equation_optimization_incremental_LLE}
\begin{aligned}
& \underset{\b{Y}_\text{updated}}{\text{minimize}}
& & \Big\| \b{Y}_\text{updated}^\top \b{M}_\text{updated} \b{Y}_\text{updated} - \frac{1}{n}\b{\Lambda} \Big\|_F^2, \\
& \text{subject to}
& & \frac{1}{n} \b{Y}_\text{updated}^\top \b{Y}_\text{updated} = \b{I}, \\
& & & \b{Y}_\text{updated}^\top \b{1} = \b{0}.
\end{aligned}
\end{equation}
It is much more efficient than solving Eq. (\ref{equation_LLE_linearEmbedding_2}) for the whole $n + n_t$ data points whose solution is Eq. (\ref{equation_LLE_linearEmbedding_eigenproblem_restated}) and is the eigenvalue problem for an $(n+n_t) \times (n + n_t)$ matrix $\b{M}$. However, Eq. (\ref{equation_optimization_incremental_LLE}) is an optimization over the $p \times p$ matrix within the Frobenius norm. As $p \ll (n + n_t)$, it is much more efficient to use incremental LLE than regular LLE for the whole old and new data. 

This optimization (\ref{equation_optimization_incremental_LLE}) can be solved using the interior point method \cite{boyd2004convex}. After ignoring the second constraint, for the reason explained before, its Lagrangian is \cite{boyd2004convex}: 
\begin{align*}
\mathcal{L} &= \Big\| \b{Y}_\text{updated}^\top \b{M}_\text{updated} \b{Y}_\text{updated} - \frac{1}{n}\b{\Lambda} \Big\|_F^2 \\
&~~~~~~~~ - \textbf{tr}\big(\b{\Lambda}^\top (\frac{1}{n} \b{Y}_\text{updated}^\top \b{Y}_\text{updated} - \b{I})\big).
\end{align*}
According to matrix derivatives and the chain rule, the derivative of this Lagrangian with respect to $\b{Y}_\text{updated}$ is:
\begin{align*}
&\frac{\partial \mathcal{L}}{\partial \b{Y}_\text{updated}} = 2\, (\b{Y}_\text{updated}^\top \b{M}_\text{updated} \b{Y}_\text{updated} - \frac{1}{n}\b{\Lambda} ) \\
&~~~~~~~~~~~~~~~~~~~~ (\b{M}_\text{updated} \b{Y}_\text{updated} + \b{M}_\text{updated}^\top \b{Y}_\text{updated}) \\
&= 4\, (\b{Y}_\text{updated}^\top \b{M}_\text{updated} \b{Y}_\text{updated} - \frac{1}{n}\b{\Lambda} ) \b{M}_\text{updated} \b{Y}_\text{updated}.
\end{align*}
The found $\b{Y}_\text{updated} \in \mathbb{R}^{(n + n_t) \times p}$ by optimization contains the row-wise $p$-dimensional embeddings of both old and new data.

\section{Landmark Locally Linear Embedding for Big Data Embedding}\label{section_landmark_LLE}

LLE is a spectral dimensionality reduction method \cite{saul2006spectral} and its solution follows an eigenvalue problem; see Eq. (\ref{equation_LLE_linearEmbedding_eigenproblem}). Therefore, it cannot handle big data where $n \gg 1$. To tackle this issue, there exist some landmark LLE methods which approximate the embedding of all points using the embedding of some landmarks. In the following, we introduce these methods. 

\subsection{Landmark LLE Using Nystrom Approximation}

Nystrom approximation, introduced below, can be used to make the spectral methods such as LLE scalable and suitable for big data embedding. 

\subsubsection{Nystrom Approximation}

\textit{Nystrom approximation} is a technique used to approximate a positive semi-definite matrix using merely a subset of its columns (or rows) \cite{williams2001using}. 
Consider a positive semi-definite matrix $\mathbb{R}^{n \times n} \ni \b{K} \succeq 0$ whose parts are:
\begin{align}\label{equation_Nystrom_partions}
\mathbb{R}^{n \times n} \ni \b{K} = 
\left[
\begin{array}{c|c}
\b{A} & \b{B} \\
\hline
\b{B}^\top & \b{C}
\end{array}
\right],
\end{align}
where $\b{A} \in \mathbb{R}^{m \times m}$, $\b{B} \in \mathbb{R}^{m \times (n-m)}$, and $\b{C} \in \mathbb{R}^{(n-m) \times (n-m)}$ in which $m \ll n$. 

The Nystrom approximation says if we have the small parts of this matrix, i.e. $\b{A}$ and $\b{B}$, we can approximate $\b{C}$ and thus the whole matrix $\b{K}$. The intuition is as follows. Assume $m=2$ (containing two points, a and b) and $n=5$ (containing three other points, c, d, and e). If we know the similarity (or distance) of points a and b from one another, resulting in matrix $\b{A}$, as well as the similarity (or distance) of points c, d, and e from a and b, resulting in matrix $\b{B}$, we cannot have much freedom on the location of c, d, and e, which is the matrix $\b{C}$. This is because of the positive semi-definiteness of the matrix $\b{K}$. 
The points selected in submatrix $\b{A}$ are named \textit{landmarks}. Note that the landmarks can be selected randomly from the columns/rows of matrix $\b{K}$ and, without loss of generality, they can be put together to form a submatrix at the top-left corner of matrix. 

As the matrix $\b{K}$ is positive semi-definite, by definition, it can be written as $\b{K} = \b{O}^\top \b{O}$. If we take $\b{O} = [\b{R}, \b{S}]$ where $\b{R}$ are the selected columns (landmarks) of $\b{O}$ and $\b{S}$ are the other columns of $\b{O}$. We have:
\begin{align}
\b{K} &= \b{O}^\top \b{O} = 
\begin{bmatrix}
\b{R}^\top \\
\b{S}^\top
\end{bmatrix}
[\b{R}, \b{S}] \label{equation_Nystrom_kernel_OtransposeO} \\
&= 
\begin{bmatrix}
\b{R}^\top \b{R} & \b{R}^\top \b{S} \\
\b{S}^\top \b{R} & \b{S}^\top \b{S}
\end{bmatrix}
\overset{(\ref{equation_Nystrom_partions})}{=} 
\begin{bmatrix}
\b{A} & \b{B} \\
\b{B}^\top & \b{C}
\end{bmatrix}.
\end{align}
Hence, we have $\b{A} = \b{R}^\top \b{R}$. The eigenvalue decomposition \cite{ghojogh2019eigenvalue} of $\b{A}$ gives:
\begin{align}
&\b{A} = \b{U} \b{\Sigma} \b{U}^\top \label{equation_Nystrom_A_eig_decomposition} \\
&\implies \b{R}^\top \b{R} = \b{U} \b{\Sigma} \b{U}^\top \implies \b{R} = \b{\Sigma}^{(1/2)} \b{U}^\top. \label{equation_Nystrom_R}
\end{align}
Moreover, we have $\b{B} = \b{R}^\top \b{S}$ so we have:
\begin{align}
&\b{B} = (\b{\Sigma}^{(1/2)} \b{U}^\top)^\top \b{S} = \b{U} \b{\Sigma}^{(1/2)} \b{S} \nonumber \\
&\overset{(a)}{\implies} \b{U}^\top \b{B} = \b{\Sigma}^{(1/2)} \b{S} \implies \b{S} = \b{\Sigma}^{(-1/2)} \b{U}^\top \b{B}, \label{equation_Nystrom_S}
\end{align}
where $(a)$ is because $\b{U}$ is orthogonal (in the eigenvalue decomposition). 
Finally, we have:
\begin{align}
\b{C} &= \b{S}^\top \b{S} = \b{B}^\top \b{U} \b{\Sigma}^{(-1/2)} \b{\Sigma}^{(-1/2)} \b{U}^\top \b{B} \nonumber \\
&= \b{B}^\top \b{U} \b{\Sigma}^{-1} \b{U}^\top \b{B} \overset{(\ref{equation_Nystrom_A_eig_decomposition})}{=} \b{B}^\top \b{A}^{-1} \b{B}. \label{equation_Nystrom_C}
\end{align}
Therefore, Eq. (\ref{equation_Nystrom_partions}) becomes:
\begin{align}\label{equation_Nystrom_partions_withDetails}
\b{K} \approx 
\left[
\begin{array}{c|c}
\b{A} & \b{B} \\
\hline
\b{B}^\top & \b{B}^\top \b{A}^{-1} \b{B}
\end{array}
\right].
\end{align}

\begin{proposition}
By increasing $m$, the approximation of Eq. (\ref{equation_Nystrom_partions_withDetails}) becomes more accurate. 
If rank of $\b{K}$ is at most $m$, this approximation is exact. 
\end{proposition}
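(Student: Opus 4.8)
The plan is to prove the two claims of the proposition separately, with the exactness claim being the more concrete and the monotonicity claim being more qualitative. For the exactness claim (the second sentence), I would work directly from the factorization $\b{K} = \b{O}^\top \b{O}$ already introduced in the excerpt. The key observation is that $\text{rank}(\b{K}) = \text{rank}(\b{O})$, so if $\text{rank}(\b{K}) \le m$, then $\b{O}$ has at most $m$ linearly independent columns. First I would argue that one can choose the $m$ landmark columns $\b{R}$ so that their span equals the full column space of $\b{O}$; since the landmarks contribute $m$ columns and the rank is at most $m$, a generic (or suitably chosen) landmark selection achieves this. Then every remaining column in $\b{S}$ lies in the column span of $\b{R}$, so there exists a matrix $\b{T}$ with $\b{S} = \b{R}\b{T}$.

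With $\b{S} = \b{R}\b{T}$ in hand, the computation closes exactly rather than approximately. I would show $\b{B} = \b{R}^\top \b{S} = \b{R}^\top \b{R}\, \b{T} = \b{A}\b{T}$, hence $\b{T} = \b{A}^{-1}\b{B}$ (using invertibility of $\b{A}$ on the relevant subspace, or the pseudo-inverse if $\b{A}$ is itself rank-deficient). Substituting into $\b{C} = \b{S}^\top \b{S} = \b{T}^\top \b{R}^\top \b{R}\, \b{T} = \b{T}^\top \b{A}\, \b{T}$ and simplifying gives
\begin{align*}
\b{C} = (\b{A}^{-1}\b{B})^\top \b{A}\, (\b{A}^{-1}\b{B}) = \b{B}^\top \b{A}^{-1}\b{B},
\end{align*}
which is precisely the approximated block in Eq. (\ref{equation_Nystrom_partions_withDetails}), now holding with equality. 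This reuses exactly the algebra already carried out in Eqs. (\ref{equation_Nystrom_R})--(\ref{equation_Nystrom_C}), the only new ingredient being that the rank condition forces $\b{S}$ to lie genuinely in the span of $\b{R}$, so that no information about $\b{C}$ is lost.

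For the monotonicity claim (the first sentence, that larger $m$ gives a better approximation), the cleanest route is to interpret the Nystrom approximation as a low-rank approximation to $\b{K}$ and bound the error by the tail of the spectrum. I would express the approximation error, say in Frobenius or spectral norm, as governed by the eigenvalues of $\b{K}$ that are \emph{not} captured by the rank-$m$ landmark block, and then observe that enlarging the landmark set (by adding columns) can only capture more of the spectrum, so the discarded tail shrinks monotonically. Making this rigorous requires relating the Nystrom error to $\sum_{i>m}\delta_i$ or to $\|\b{C} - \b{B}^\top\b{A}^{-1}\b{B}\|$ and arguing this is nonincreasing in $m$; this is where care is needed, since the bound depends on which columns are selected and the statement is really about a typical or best selection rather than every possible one.

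The hard part will be the monotonicity claim, not the exactness claim. Exactness is a clean finite-dimensional linear-algebra identity that follows immediately once the rank hypothesis is used to write $\b{S} = \b{R}\b{T}$. The genuine subtlety in the first claim is that ``more accurate'' is selection-dependent: for a \emph{fixed} bad choice of landmarks, adding a column need not strictly improve a particular matrix entry, so the honest statement is about the best achievable approximation (equivalently, the optimal rank-$m$ truncation) being monotone in $m$, which follows from the eigenvalue tail $\sum_{i>m}\delta_i$ being nonincreasing. I expect the paper to argue this informally via the intuition that more landmarks constrain the geometry more tightly, rather than proving a strict per-entry inequality, so I would frame my proof of that half as a spectral-tail argument and flag that strict improvement requires a generic landmark choice.
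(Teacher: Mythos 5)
Your proposal is correct and substantially more rigorous than the paper's own proof, which is little more than a remark: the paper merely observes that Eq. (\ref{equation_Nystrom_C}) requires $\b{A}^{-1}$ to exist, concludes that $\b{A}$ must be full rank, and then asserts without any argument that using more landmarks gives better accuracy ``in practice.'' The exactness claim is never explicitly proved there; it is implicit in the derivation preceding the proposition, where writing $\b{R} = \b{\Sigma}^{(1/2)}\b{U}^\top$ from $\b{A} = \b{R}^\top\b{R}$ silently assumes $\b{O}$ has only $m$ rows, i.e., that the rank of $\b{K}$ is at most $m$. Your route makes this hidden hypothesis explicit and does the work the paper skips: the decomposition $\b{S} = \b{R}\b{T}$ is precisely where the rank condition enters, and your chain $\b{B} = \b{A}\b{T}$, hence $\b{C} = \b{T}^\top\b{A}\,\b{T} = \b{B}^\top\b{A}^{-1}\b{B}$, is a genuine proof of exactness, including the correct caveat that the landmark columns must span the column space of $\b{O}$ (otherwise even the pseudo-inverse version is not exact). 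Likewise, your observation that the monotonicity claim is selection-dependent, and is only honestly stated for a best or generic landmark choice via a spectral-tail bound, identifies exactly the gap the paper papers over with practical advice; you correctly anticipated that the paper would argue this half informally. In short, both halves of your plan are sound; what your approach buys is an actual proof where the paper offers only the invertibility observation, at the cost of having to introduce the auxiliary matrix $\b{T}$ and the landmark-selection caveat explicitly.
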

\begin{proof}
In Eq. (\ref{equation_Nystrom_C}), we have the inverse of $\b{A}$. In order to have this inverse, the matrix $\b{A}$ must not be singular. For having a full-rank $\b{A} \in \mathbb{R}^{m \times m}$, the rank of $\b{A}$ should be $m$. This results in $m$ to be an upper bound on the rank of $\b{K}$ and a lower bound on the number of landmarks. In practice, it is recommended to use more number of landmarks for more accurate approximation but there is a trade-off with the speed. 
\end{proof}

\begin{corollary}
As we usually have $m \ll n$, the Nystrom approximation works well especially for the low-rank matrices \cite{kishore2017literature}. Usually, because of the manifold hypothesis, data fall on a submanifold; hence, usually, the kernel (similarity) matrix  or the distance matrix has a low rank. Therefore, the Nystrom approximation works well for many kernel-based or distance-based manifold learning methods. 
\end{corollary}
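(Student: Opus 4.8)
The plan is to chain the conclusion of the preceding Proposition to the structural assumption underlying manifold learning. First I would recall from the Proposition that the reconstruction in Eq. (\ref{equation_Nystrom_partions_withDetails}) is exact precisely when $\text{rank}(\b{K}) \le m$, and that it degrades gracefully as the number of landmarks $m$ is reduced. The immediate consequence is that if $\b{K}$ has rank $r$, then taking any $m \ge r$ landmarks already yields an exact approximation; when $r \ll n$ one may therefore choose $m$ with $r \le m \ll n$, securing simultaneously the computational saving (diagonalizing and inverting the $m \times m$ block $\b{A}$ rather than the full $n \times n$ matrix $\b{K}$, cf. Eqs. (\ref{equation_Nystrom_A_eig_decomposition}) and (\ref{equation_Nystrom_C})) and the accuracy.

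Next I would invoke the manifold hypothesis: the data $\{\b{x}_i\}_{i=1}^n$ are assumed to lie on (or near) a $p$-dimensional submanifold with $p \ll d$. Under this assumption the pairwise similarities or distances are governed by a small number of intrinsic degrees of freedom, so the Gram matrix $\b{O}^\top \b{O}$ of the feature representation (which is exactly $\b{K}$ in Eq. (\ref{equation_Nystrom_kernel_OtransposeO})) inherits a rank controlled by this intrinsic dimension rather than by $n$. Hence for such data the rank of $\b{K}$ is typically far below $n$, placing us in exactly the regime where the Proposition guarantees a good approximation with $m \ll n$. The claim that Nystrom extends to kernel- and distance-based manifold learners then follows, since each such method ultimately supplies a positive semi-definite matrix of this low-rank type.

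The main obstacle will be making the phrase ``low rank'' precise, because the implication ``data on a submanifold $\Rightarrow$ kernel matrix is low-rank'' is not literally exact for every kernel. A Gaussian kernel, for instance, is generically full-rank, yet its spectrum decays rapidly, so the operative notion is \emph{numerical} (or effective) rank rather than algebraic rank. I would therefore reframe the argument in terms of the tail of the eigenvalue spectrum of $\b{K}$: the Proposition's exactness statement upgrades to the assertion that the Nystrom error is controlled by the eigenvalues discarded beyond the $m$-th, and the manifold hypothesis is what forces these trailing eigenvalues to be small. This softening from exact rank to rapid spectral decay is the only delicate point; the remainder is a direct corollary of the Proposition together with the low intrinsic dimensionality asserted by the manifold hypothesis.
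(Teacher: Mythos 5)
Your proposal is correct and follows exactly the reasoning the paper intends: the paper in fact supplies no proof for this corollary at all, presenting it as an immediate consequence of the preceding proposition (exactness of the Nystrom approximation when $\mathrm{rank}(\b{K}) \le m$) combined with the manifold hypothesis, which is precisely the chain you construct. Your closing refinement --- replacing literal algebraic low rank with rapid spectral decay or numerical rank, since for instance a Gaussian kernel matrix is generically full-rank even for data lying on a submanifold --- is more careful than the paper's own wording and repairs the only loose step in its informal argument.
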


\subsubsection{Using Kernel Approximation in Landmark LLE}

Consider Eq. (\ref{equation_Nystrom_partions}) or (\ref{equation_Nystrom_partions_withDetails}) as the partitions of the kernel matrix $\b{K}$. Note that the (Mercer) kernel matrix is positive semi-definite so the Nystrom approximation can be applied for kernels.  

Recall that LLE can be viewed as a special case of kernel PCA with the specified kernel in Eq. (\ref{equation_outOfSample_LLE_kernel_K_and_M}). Moreover, recall that according to Eq. (\ref{equation_eigenfunction_decomposition}), the eigenvectors of kernel matrix are used and then Eq. (\ref{equation_embedding_eigenfunction}) embeds data. 
In other words, using the kernel defined by Eq. (\ref{equation_outOfSample_LLE_kernel_K_and_M}), one can apply kernel PCA \cite{ghojogh2019unsupervised} and obtain the desired embedding of LLE. 
However, for big data, the eigenvalue decomposition of kernel matrix is intractable. Therefore, using Eq. (\ref{equation_Nystrom_A_eig_decomposition}), we decompose an $m \times m$ submatrix of kernel. 
In kernel PCA or generalized classical MDS, the kernel can be seen as the inner product of embeddings, i.e. \cite{ghojogh2020multidimensional}:
\begin{align}\label{equation_kernel_innerProduct_Y}
\mathbb{R}^{n \times n} \ni \b{K} = \b{Y}'^\top \b{Y}',
\end{align}
where $\mathbb{R}^{p \times n} \ni \b{Y}' = \b{Y}^\top$ because the embeddings are stacked row-wise in LLE, i.e., $\b{Y} \in \mathbb{R}^{n \times p}$. 
Comparing Eqs. (\ref{equation_kernel_innerProduct_Y}) and (\ref{equation_Nystrom_kernel_OtransposeO}) shows that:
\begin{align}\label{equation_Nystrom_Y}
\mathbb{R}^{n \times n} \ni \b{Y} = [\b{R}, \b{S}] \overset{(a)}{=} [\b{\Sigma}^{(1/2)} \b{U}^\top, \b{\Sigma}^{(-1/2)} \b{U}^\top \b{B}],
\end{align}
where $(a)$ is because of Eqs. (\ref{equation_Nystrom_R}) and (\ref{equation_Nystrom_S}) and the terms $\b{U}$ and $\b{\Sigma}$ are obtained from Eq. (\ref{equation_Nystrom_A_eig_decomposition}). 
The Eq. (\ref{equation_Nystrom_Y}) gives the approximately embedded data, with a good approximation. This is the embedding in landmark LLE using the Nystrom approximation. Truncating this matrix to have $\b{Y}' \in \mathbb{R}^{p \times n}$, with top $p$ rows, gives the $p$-dimensional embedding of the $n$ points, $\mathbb{R}^{n \times p} \ni \b{Y} = \b{Y}'^\top$.

\subsection{Landmark LLE Using Locally Linear Landmarks}

Another way for landmark LLE to handle big data is using Locally Linear Landmarks (LLL) \cite{vladymyrov2013locally}. 
This method maps the $n$ embedded data $\b{Y} \in \mathbb{R}^{n \times p}$ to $m$ landmarks $\widetilde{\b{Y}} \in \mathbb{R}^{m \times p}$, where $m \ll n$, using a projection matrix $\widetilde{\b{U}} = [\widetilde{\b{u}}_1, \dots, \widetilde{\b{u}}_n]^\top \in \mathbb{R}^{n \times m}$:
\begin{align}\label{equation_landmark_LLE_projection_Y}
\mathbb{R}^{n \times p} \ni \b{Y} := \widetilde{\b{U}} \widetilde{\b{Y}}.
\end{align}

Assume that in some way, we choose the landmarks in the input space. For example, we choose a subset of data points $\b{X} \in \mathbb{R}^{d \times n}$ to have the landmarks $\widetilde{\b{X}} \in \mathbb{R}^{d \times m}$. In other words, $\mathbb{C}\text{ol}(\b{X}) \subseteq \mathbb{C}\text{ol}(\widetilde{\b{X}})$ where $\mathbb{C}\text{ol}(\cdot)$ denotes the column space of matrix. 
The projection to landmarks should also work for the input space as:
\begin{align}
\mathbb{R}^{n \times d} \ni \b{X}^\top := \widetilde{\b{U}} \widetilde{\b{X}}^\top.
\end{align}
With adding some constraint, we can write this goal as an optimization problem:
\begin{equation}
\begin{aligned}
& \underset{\widetilde{\b{U}}}{\text{minimize}}
& & \sum_{i=1}^n ||\b{x}_i - \widetilde{\b{X}} \widetilde{\b{u}}_i||_2^2., \\
& \text{subject to}
& & \b{1}^\top \widetilde{\b{u}}_i = 1, ~~~ \forall i \in \{1, \dots, n\},
\end{aligned}
\end{equation}
which is exactly in the form of Eq. (\ref{equation_LLE_linearReconstruct}). Hence, its solution is similar to Eq. (\ref{equation_w_tilde_solution}):
\begin{align}
\mathbb{R}^{m} \ni \widetilde{\b{u}}_i =  \frac{\widetilde{\b{G}}_i^{-1} \b{1}}{\b{1}^\top \widetilde{\b{G}}_i^{-1} \b{1}},
\end{align}
where:
\begin{align}
\mathbb{R}^{m \times m} \ni \widetilde{\b{G}}_i := (\b{x}_i \b{1}^\top - \widetilde{\b{X}})^\top (\b{x}_i \b{1}^\top - \widetilde{\b{X}}).
\end{align}

Also according to Eq. (\ref{equation_landmark_LLE_projection_Y}), the Eq. (\ref{equation_LLE_linearEmbedding_2}) becomes:
\begin{equation}\label{equation_landmark_LLE_linearEmbedding}
\begin{aligned}
& \underset{\widetilde{\b{Y}}}{\text{minimize}}
& & \textbf{tr}(\widetilde{\b{Y}}^\top \widetilde{\b{U}}^\top \b{M} \widetilde{\b{U}} \widetilde{\b{Y}}), \\
& \text{subject to}
& & \frac{1}{n} \widetilde{\b{Y}}^\top \widetilde{\b{U}}^\top \widetilde{\b{U}} \widetilde{\b{Y}} = \b{I}, 
\end{aligned}
\end{equation}
whose second constraint is ignored because, as explained before, it is satisfied anyways. Let:
\begin{align}
\mathbb{R}^{m \times m} \ni \widetilde{\b{M}} := \widetilde{\b{U}}^\top \b{M} \widetilde{\b{U}}. 
\end{align}
Similar to solution of Eq. (\ref{equation_LLE_linearEmbedding_2}), the solution to Eq. (\ref{equation_landmark_LLE_linearEmbedding}) is the eigenvalue problem for $\widetilde{\b{M}}$ \cite{ghojogh2019eigenvalue}. 
In other words, the embeddings of landmark points, $\widetilde{\b{Y}}$, are the $p$ smallest eigenvectors of $\widetilde{\b{M}}$ after ignoring the eigenvector with zero eigenvalue. As the dimensionality of $\widetilde{\b{M}}$ is $m \times m$, landmark LLE using LLL is much more efficient than LLE whose embeddings are the eigenvectors of $\b{M} \in \mathbb{R}^{n \times n}$. The difference of efficiency gets noticeable especially for big data where $n \gg m$. 
Finally, using Eq. (\ref{equation_landmark_LLE_projection_Y}), the embeddings of all $n$ points are approximated by the obtained embeddings of $m$ landmarks.

\section{Parameter Selection of the Number of Neighbors in LLE}\label{section_parameter_selection_LLE}

LLE has a hyper-parameter which is the number of neighbors $k$. 
There are several different algorithms for finding an optimal $k$. In the following, we explain these algorithms. 

\subsection{Parameter Selection Using Residual Variance}\label{section_parameter_selection_residual_variance}

Assume we have candidate number of neighbors, denoted by $\{1, 2, \dots, k_\text{max}\}$ which we want to find the best $k$ from. For every $k \in \{1, 2, \dots, k_\text{max}\}$, we can run LLE and find the embeddings $\b{Y}$ for data $\b{X}$. Let $\b{D}_X$ and $\b{D}_Y$ denote the Euclidean distance matrices over $\b{X}$ and $\b{Y}$, respectively. 
Let $\rho^2_{\b{D}_X, \b{D}_Y}$ be the standard linear correlation coefficient, i.e., $\rho^2_{\b{D}_X, \b{D}_Y} := S_{\b{D}_X, \b{D}_Y} / (S_{\b{D}_X} S_{\b{D}_Y})$ where $\rho^2_{\b{D}_X, \b{D}_Y}$ is the covariance of $\b{D}_X$ and $\b{D}_Y$ and $S_{\b{D}_X}$ and $S_{\b{D}_Y}$ are the standard deviations of $\b{D}_X$ and $\b{D}_Y$, respectively. 
The residual variance for a number of neighbors $k$ is defined as \cite{kouropteva2002selection}: 
\begin{align}
\sigma^2_k(\b{D}_X, \b{D}_Y) := 1 - \rho^2_{\b{D}_X, \b{D}_Y}.
\end{align}
The $k$ value giving the smallest value for the residual variance is the optimal number of neighbors because it maximizes the correlation between the distances in the input and embedding spaces. Hence:
\begin{align}\label{equation_selection_k_residual_variance_method}
k := \arg \min_k \sigma^2_k(\b{D}_X, \b{D}_Y).
\end{align}

In order not to run LLE for all $k \in \{1, 2, \dots, k_\text{max}\}$, which is computationally expensive, we can have a hierarchical approach \cite{kouropteva2002selection}. In this approach, we calculate $\varepsilon(\widetilde{\b{W}})$, in Eq. (\ref{equation_LLE_linearReconstruct}), for every value of $k \in \{1, 2, \dots, k_\text{max}\}$. For the local minimums of $\varepsilon(\widetilde{\b{W}})$ (whenever $\varepsilon(\widetilde{\b{W}})$ for a $k$ is smaller than that for $k-1$ and $k+1$), we calculate Eq. (\ref{equation_selection_k_residual_variance_method}) and find the best $k$ among the $k$'s corresponding to local minimums. 

\subsection{Parameter Selection Using Procrustes Statistics}

Another method for parameter selection of $k$ in LLE is \cite{goldberg2009local} which uses Procrustes statistics \cite{sibson1978studies}. The Procrustes statistics between $\b{X} = [\b{x}_1, \dots, \b{x}_n] \in \mathbb{R}^{d \times n}$ and their embeddings $\b{Y} = [\b{y}_1, \dots, \b{y}_n]^\top \in \mathbb{R}^{n \times p}$ is \cite{sibson1978studies,goldberg2009local}:
\begin{align}
P(\b{X}, \b{Y}) &:= \sum_{i=1}^n \|\b{x}_i - \b{y}_i \b{A}^\top - \b{b}\|_2^2 \nonumber \\
&= \|\b{H}_n (\b{X}^\top - \b{Y} \b{A}^\top)\|_F^2,
\end{align}
with the orthogonal rotation matrix, i.e. $\b{A}^\top \b{A} = \b{I}$, and the translation matrix $\b{b} = \bar{\b{x}} - \bar{\b{y}} \b{A}^\top$ where $\bar{\b{x}}$ and $\bar{\b{y}}$ are the means of samples $\b{X}$ and $\b{Y}$, respectively. The matrix $\mathbb{R}^{n \times n} \ni \b{H}_n = \b{I}_n - (1/n) \b{1}\b{1}^\top$ is the centering matrix. 
According to the Procrustes statistics \cite{sibson1978studies}, the rotation matrix can be computed by $\mathbb{R}^{d \times p} \ni \b{A} = \b{U} \b{V}^\top$ where $\b{U}\b{\Sigma}\b{V}^\top$ is the singular value decomposition of $\b{X} \b{H}_n \b{Y} \in \mathbb{R}^{d \times p}$. 

Let $\b{X}_i \in \mathbb{R}^{d \times k}$ and $\b{Y}_i \in \mathbb{R}^{k \times p}$ be the $k$ neighbors of $\b{x}_i$ in the input and embedding spaces, respectively. 
For every $k$, we apply LLE and get some embedding $\b{Y}$ for $\b{X}$, as well as some neighborhood graph. A normalized Procrustes statistics for a number of neighbors $k$ is \cite{goldberg2009local}:
\begin{align}
R_k(\b{X}, \b{Y}) := \frac{1}{n} \sum_{i=1}^n \frac{P(\b{X}_i, \b{Y}_i)}{\|\b{H}_k \b{X}_i^\top\|_F^2},
\end{align}
The best $k \in \{1. \dots, k_\text{max}\}$ reduces Procrustes statistics the most:
\begin{align}\label{equation_selection_k_Procrustes_stat_method}
k := \arg \min_k R_k(\b{X}, \b{Y}).
\end{align}
Again, a hierarchical approach, introduced in Section \ref{section_parameter_selection_residual_variance}, can be used to determine the best value $k$ using Eq. (\ref{equation_selection_k_Procrustes_stat_method}).

\subsection{Parameter Selection Using Preservation Neighborhood Error}

Consider the data points $\{\b{x}_i \in \mathbb{R}^d\}_{i=1}^n$ and their embeddings $\{\b{y}_i \in \mathbb{R}^p\}_{i=1}^n$. For a point $\b{x}_i$, let its $k$ neighbors in the input space be denoted by $\{\b{\eta}_i \in \mathbb{R}^d\}_{i=1}^k$. The embeddings of $\{\b{\eta}_i\}_{i=1}^k$ are denoted by $\{\b{\phi}_i \in \mathbb{R}^p\}_{i=1}^k$. Now, let the $k$ neighbors of $\b{y}_i$ in the embedding space space be denoted by $\{\b{\beta}_i \in \mathbb{R}^p\}_{i=1}^k$. The points which are among the $k$ neighbors of $\b{y}_i$ but not among the $k$ neighbors of $\b{x}_i$ are denoted by $\{\b{\gamma}_i \in \mathbb{R}^p\}_{i=1}^{k_i'}$ in the embedding space, where the number of these points is denoted by $k'_i$. In other words, we have $\{\gamma_i\}_{i=1}^{k_i'} = \{\b{\beta}_i\}_{i=1}^k - \{\b{\phi}_i\}_{i=1}^k$.
The corresponding points to $\{\b{\gamma}_i \in \mathbb{R}^p\}$ in the input space are denoted by $\{\b{\theta}_i \in \mathbb{R}^d\}_{i=1}^{k_i'}$. For an illustration of these definitions, the reader can refer to {\citep[Fig. 1]{alvarez2011global}}. 

The Preservation Neighborhood Error (PNE), for a number of neighbors $k$, is defined as \cite{alvarez2011global}:
\begin{equation}
\begin{aligned}
&\text{PNE}_k(\b{X}, \b{Y}) \\
&:= \frac{1}{2n} \sum_{i=1}^n \Big( \sum_{j=1}^k \frac{(\|\b{x}_i - \b{\eta}_j\|_2 - \|\b{y}_i - \b{\phi}_j\|_2)^2}{k} + \\
&~~~~~~~~~~~~~~~~~~~~~ \sum_{j=1}^{k'_i} \frac{(\|\b{x}_i - \b{\theta}_j\|_2 - \|\b{y}_i - \b{\gamma}_j\|_2)^2}{k'_i} \Big).
\end{aligned}
\end{equation}
The first term in summation tries to preserve the local structure of points in the embedding space as in the input space. The second term tries to keep the points away in the embedding space if they are far from each other in the input space; in other words, the second term avoids false folding of manifold. 
The best $k \in \{1. \dots, k_\text{max}\}$ reduces PNE the most:
\begin{align}\label{equation_selection_k_PNE_method}
k := \arg \min_k \text{PNE}_k(\b{X}, \b{Y}).
\end{align}
Again, a hierarchical approach, introduced in Section \ref{section_parameter_selection_residual_variance}, can be used to determine the best value $k$ using Eq. (\ref{equation_selection_k_PNE_method}). 

\subsection{Parameter Selection Using Local Neighborhood Selection}

There is another algorithm for selecting the best number of neighbors, named Local Neighborhood Selection (LNS) \cite{alvarez2011global}. This algorithm finds the best number of neighbors per each point $\b{x}_i$; therefore, it allows us to have different number of neighbors for different points. In this algorithm, we first calculate the Euclidean and geodesic distance matrices, denoted by $\b{D} \in \mathbb{R}^{n \times n}$ and $\b{D}^{(g)} \in \mathbb{R}^{n \times n}$, respectively. 
Initialize $k_\text{min} = 1$. 
We find the $k_\text{min}$-NN graph using $\b{D}$. 
We check if the $k_\text{min}$-NN graph is connected, using a Breadth First Search (BFS) \cite{cormen2009introduction}. If it is not connected, we increment $k_\text{min}$ by one. We do this until the graph gets connected. We set $k_\text{max} := n^2 / (k_\text{min} \times |E|)$ where $|E|$ is the number of edges in the $k_\text{min}$-NN graph. We define $\b{k} = [\b{k}(1), \dots, \b{k}(k_\text{max} - k_\text{min})] := [k_\text{min}+1, \dots, k_\text{max}] \in \mathbb{R}^{k_\text{max} - k_\text{min}}$. 
Let $\b{\eta}^{D,k}_i$ and $\b{\eta}^{D^{(g)},k}_i$ be the set of $k$-NN of $\b{x}_i$ using the distance matrices $\b{D}$ and $\b{D}^{(g)}$, respectively, where $k \in \{k_\text{min}+1, \dots, k_\text{max}\}$. 
If $|\cdot|$ and $\bar{\cdot}$ denote the cardinality and complement of set, respectively, the $(i,j)$-th element of the linearity conservation matrix $\b{V} \in \mathbb{R}^{n \times (k_\text{max} - k_\text{min})}$ is:
\begin{align}
\b{V}(i,j) := \frac{\big|\overline{(\b{\eta}^{D,\b{k}(j)}_i \cap \b{\eta}^{D^{(g)},\b{k}(j)}_i)}\big|}{\b{k}(j)}.
\end{align}
The smaller this quantity, the closer the geodesic and Euclidean distances behave so the more local structure is preserved. For every row of the linearity conservation matrix (i.e., for every point $\b{x}_i$), the best number of neighbors is determined as:
\begin{align}
k(\b{x}_i) := \arg \min_{\b{k}(j)} \b{V}(i,j). 
\end{align}
In case of ties, we get the largest value of $\b{k}(j)$ for better capture of neighborhood structure.  

\section{Supervised and Semi-Supervised LLE}\label{section_supervised_LLE}

In supervised and semi-supervised LLE, the class labels are used fully or partially, respectively. 
There are different versions of these methods which are explained in the following. We do not explain SLLEP \cite{li2011supervised}, as a supervised LLE method, here because it was introduced in Section \ref{section_LLE_projection}. Moreover, we do not cover supervised LLE by adjusting weights \cite{he2019nonlinear} and Discriminant LLE \cite{li2008discriminant} here because they will be explained in Sections \ref{section_adjusted_weights_LLE} and \ref{section_discriminant_LLE}, respectively. 

\subsection{Supervised LLE}\label{section_SLLE}

We can have Supervised LLE (SLLE) \cite{kouropteva2002beyond,de2003supervised,kouropteva2003supervised}, which can be useful for both embedding and classification \cite{de2002locally}. 
SLLE makes use of class labels of the data points. 
The main idea of SLLE is to artificially increase the inter-class variance of data by adding to the distances of points from different classes. Assume the Euclidean distance matrix is denoted by $\b{D} \in \mathbb{R}^{n \times n}$. In SLLE, the distance matrix is modified to \cite{de2003supervised}:
\begin{align}\label{equation_SLLE_modified_distance}
\mathbb{R}^{n \times n} \ni \b{D}' := \b{D} + \alpha\, (d_\text{max}) (\b{1}\b{1}^\top - \b{\Delta}),
\end{align}
where $\b{1}\b{1}^\top \in \mathbb{R}^{n \times n}$ is the matrix with all elements as one, $d_\text{max} \in \mathbb{R}$ is the diameter of data:
\begin{align}
d_\text{max} := \max_{i,j}(\|\b{x}_i - \b{x}_j\|_2),
\end{align}
and $\b{\Delta}$ is a matrix whose $(i,j)$-th element is:
\begin{align}
\b{\Delta}(i,j) :=
\left\{
    \begin{array}{ll}
        1 & \mbox{if } c_i = c_j, \\
        0 & \mbox{Otherwise,}
    \end{array}
\right.
\end{align}
where $c_i$ denotes the class label of $\b{x}_i$, and $\alpha \in [0,1]$. When $\alpha=0$, SLLE is reduced to LLE which is unsupervised. When $\alpha=1$, SLLE is fully supervised; this case is also named 1-SLLE \cite{kouropteva2002beyond}. When $\alpha \in (0,1)$, we have partially supervised SLLE, also called $\alpha$-SLLE \cite{de2002locally}. Note that Eq. (\ref{equation_SLLE_modified_distance}) does not change the distances between points belonging to the same class. 
After modifying the distance matrix, SLLE finds $k$NN graph using the modified distances and the rest of algorithm is the same as in LLE. 

\subsection{Enhanced Supervised LLE}

Enhanced Supervised LLE (ESLLE) \cite{zhang2009enhanced}, not only artificially increases the inter-class variances, but also artificially reduces the intra-class variances. Note that the idea of increasing and decreasing the inter-class and intra-class variances, respectively, is common in supervised embedding, such as Fisher discriminant analysis \cite{ghojogh2019fisher}. ESLLE modifies the distances to:
\begin{align}\label{equation_ESLLE_distance_modification}
\b{D}' := 
\left\{
    \begin{array}{ll}
        \sqrt{1 - e^{-\b{D}^2/\beta}} & \mbox{if } c_i = c_j, \\
        \sqrt{e^{\b{D}^2 / \beta}} - \alpha & \mbox{Otherwise,}
    \end{array}
\right.
\end{align}
where $\alpha \in [0,1]$ and:
\begin{align}
\beta := \text{average}_{i,j}(\|\b{x}_i - \b{x}_j\|_2).
\end{align}
In ESLLE, the distance of points from different classes grows exponentially while the distances of points in the same class have a horizontal asymptote of one (see {\citep[Fig. 1]{zhang2009enhanced}}). 
Using the modified distances, $k$NN graph is found and the rest is as in LLE.

\subsection{Supervised LLE Projection}\label{section_LLE_projection}

We can approximate the mapping $\b{X} \mapsto \b{Y}$ using a linear projection. Supervised LLE Projection (SLLEP) \cite{li2011supervised} finds a linear projection in the context of SLLE \cite{de2003supervised}. First, SLLEP finds the embedding of training data, $\b{Y}$, using SLLE, introduced in Section \ref{section_SLLE}. It then tries to approximate this embedding by a linear projection $\b{Y} = \b{U}^\top \b{X}$ where $\b{U} = [\b{u}_1, \dots, \b{u}_p] \in \mathbb{R}^{d \times p}$ is the projection matrix. Let the embedding of point $\b{x}_i$ be $\mathbb{R}^p \ni \b{y}_i := [\b{y}_i(1), \dots, \b{y}_i(p)]^\top$. Also, let $\mathbb{R}^n \ni \b{y}^j := [\b{y}_1(j), \dots, \b{y}_n(j)]^\top$. This approximation can be done using least squares optimization:
\begin{align}
\b{u}_j = \arg \min_{\b{u}} \sum_{i=1}^{n} (\b{u}^\top \b{x}_i - \b{y}_i(j))^2, \quad \forall j \in \{1, \dots, p\},
\end{align}
whose solution is similar to the solution of linear regression \cite{hastie2009elements}:
\begin{align}
\b{u}_j = (\b{X}\b{X}^\top)^{-1} \b{X} \b{y}^j.
\end{align}
In case $\b{X}\b{X}^\top$ is singular, we can use the regularized least squares optimization with the regularization parameter $\beta$. In this case, the solution is similar to the ridge regression \cite{hastie2009elements}:
\begin{align}
\b{u}_j = (\b{X}\b{X}^\top + \beta \b{I})^{-1} \b{X} \b{y}^j.
\end{align}
SLLEP can be used for approximation of out-of-sample embedding for new data $\b{X}^{(t)}$ by $\b{Y}^{(t)} = \b{U}^\top \b{X}^{(t)}$. 
It is also noteworthy that the approximation used in SLLEP can be used for approximating the unsupervised LLE with a linear projection, too. 

\subsection{Probabilistic Supervised LLE}

Probability-based LLE (PLLE) \cite{zhao2009supervised} is another supervised method for LLE which can also handle out-of-sample data. 
For every training point $\b{x}_i$, the probability of belonging to class $c_i$ should be one; hence, its one-hot encoding is:
\begin{align}
\mathbb{R}^{c} \ni \b{p}(\b{x}_i) := \b{1}_{c_i} = [0, \dots, 0,1,0, \dots, 0]^\top,
\end{align}
whose $c_i$-th element is one.
However, for the out-of-sample data, the probability is found using logistic regression \cite{kleinbaum2002logistic}. PLLE, first, applies unsupervised LLE on both training and out-of-sample data (see Sections \ref{section_LLE} and \ref{section_LLE_outOfSample}). Then, for the embedding of out-of-sample points, denoted by $\{\b{y}_i^{(t)}\}_{i=1}^{n_t}$, it learns logistic functions of all $c$ classes:
\begin{align}
\pi(\b{y}_i^{(t)}; \b{a}_\ell, \b{b}_\ell) := \frac{e^{\b{a}_\ell + \b{b}_\ell^\top \b{y}_i^{(t)}}}{1 + e^{\b{a}_\ell + \b{b}_\ell^\top \b{y}_i^{(t)}}}, \quad \forall \ell \in \{1, \dots, c\},
\end{align}
where the parameters $\b{a}_\ell$ and $\b{b}_\ell$ are found by logistic regression. Hence, we have $\{\pi(\b{y}_i^{(t)}; \b{a}_\ell, \b{b}_\ell)\}_{i=1}^c$. 
The probability of $\b{x}_i$ belonging to every $\ell$-th class is:
\begin{align}
\mathbb{R} \ni p_\ell(\b{x}_i^{(t)}) := \frac{\pi(\b{y}_i^{(t)}; \b{a}_\ell, \b{b}_\ell)}{\sum_{\ell'=1}^c \pi(\b{y}_i^{(t)}; \b{a}_{\ell'}, \b{b}_{\ell'})}.
\end{align}
Therefore, the probability vector for $\b{x}_i^{(t)}$ is $\mathbb{R}^{c} \ni \b{p}(\b{x}_i^{(t)}) := [p_\ell(\b{x}_1^{(t)}), \dots, p_\ell(\b{x}_c^{(t)})]^\top$. 
PLLE uses Eq. (\ref{equation_SLLE_modified_distance}) for modification of distances but, as it can handle out-of-sample data, we put together all training and out-of-sample points in this stage; thus, we have have $\b{D}, \b{D}', \b{\Delta} \in \mathbb{R}^{(n + n_t) \times (n + n_t)}$ and define $\b{\Delta}(i,j)$ as ($\forall i, j \in \{1, \dots, n+n_t\}$):
\begin{align}
\b{\Delta}(i,j) :=
\left\{
    \begin{array}{ll}
        1 & \mbox{if } c_i = c_j, \\
        \b{p}(\b{x}_i)^\top \b{p}(\b{x}_j) & \mbox{Otherwise.}
    \end{array}
\right.
\end{align}
Again, using the modified distances, $k$NN graph is found and the rest is as in LLE.

\subsection{Semi-Supervised LLE}

When some of data have labels and some don not, we can use semi-supervised LLE \cite{zhang2009dimension}. Similar to Eq. (\ref{equation_ESLLE_distance_modification}), this method modifies the distances as:
\begin{align}
\b{D}' \! := \!
\left\{
    \begin{array}{ll}
        \sqrt{1 - e^{-\b{D''}^2/\beta}} - \alpha & \mbox{if } c_i = c_j, \\
        \sqrt{1 - e^{-\b{D''}^2/\beta}} & \mbox{if } \b{x}_i \text{ or } \b{x}_j \text{ is unlabeled}, \\
        \sqrt{e^{\b{D''}^2 / \beta}} & \mbox{Otherwise,}
    \end{array}
\right.
\end{align}
where:
\begin{align}
\b{D}''(i,j) := \frac{\b{D}(i,j)}{ \sqrt{m_i \times m_j} },
\end{align}
and $\mathbb{R} \ni m_i := \text{average}_\ell(\|\b{x}_i - \b{x}_\ell)\|_2; \forall \ell \in \{1, \dots, n\})$. 

\subsection{Supervised Guided LLE}

There is a supervised LLE method, named Guided LLE (GLLE) \cite{alipanahi2011guided}, which makes use of Hilbert-Schmidt Independence Criterion (HSIC) \cite{gretton2005measuring} for utilizing the labels in embedding. 
In the following, we explain this method. 

\subsubsection{Seeing LLE as Kernel PCA}

As was mentioned in Section \ref{section_outOfSample_eigenfunctions}, LLE can be seen as a special case of kernel LLE where the inverse or negative sign of $\b{M}$ can be interpreted as its kernel. The kernel of LLE in kernel PCA can be either Eq. (\ref{equation_outOfSample_LLE_kernel_K_and_M}) \cite{scholkopf2002learning,bengio2003learning} or \cite{ham2004kernel}:
\begin{align}\label{equation_LLE_asKernelPCA_inverseM}
\mathbb{R}^{n \times n} \ni \b{K} := \b{M}^\dagger,
\end{align}
which is the pseudo-inverse of the matrix $\b{M}$. 

\subsubsection{Hilbert-Schmidt Independence Criterion}

Suppose we want to measure the dependence of two random variables. Measuring the correlation between them is easier because correlation is just ``linear'' dependence. 
According to \cite{hein2004kernels}, two random variables are independent if and only if any bounded continuous functions of them are uncorrelated. Therefore, if we map the two random variables $\b{x}$ and $\b{y}$ to two different (``separable'') Reproducing Kernel Hilbert Spaces (RKHSs) and have $\b{\phi}(\b{x})$ and $\b{\phi}(\b{y})$, we can measure the correlation of $\b{\phi}(\b{x})$ and $\b{\phi}(\b{y})$ in the Hilbert space to have an estimation of dependence of $\b{x}$ and $\b{y}$ in the original space. 

The correlation of $\b{\phi}(\b{x})$ and $\b{\phi}(\b{y})$ can be computed by the Hilbert-Schmidt norm of the cross-covariance of them \cite{gretton2005measuring}. Note that the squared Hilbert-Schmidt norm of a matrix $\b{A}$ is \cite{bell2016trace}:
\begin{align*}
||\b{A}||_{HS}^2 := \textbf{tr}(\b{A}^\top \b{A}),
\end{align*}
and the cross-covariance matrix of two vectors $\b{x}$ and $\b{y}$ is \cite{gretton2005measuring}:
\begin{align*}
\mathbb{C}\text{ov}(\b{x}, \b{y}) := \mathbb{E}\Big[&\big(\b{x} - \mathbb{E}(\b{x})\big) \big(\b{y} - \mathbb{E}(\b{y})\big) \Big].
\end{align*}
Using the explained intuition, an empirical estimation of the HSIC is introduced \cite{gretton2005measuring}:
\begin{align}\label{equation_HSIC}
\text{HSIC} := \frac{1}{(n-1)^2}\, \textbf{tr}(\b{K}_x\b{H}\b{K}_y\b{H}),
\end{align}
where $\b{K}_x$ and $\b{K}_y$ are the kernels over $\b{x}$ and $\b{y}$, respectively, and $\b{H}$ is the centering matrix.
The term $1/(n-1)^2$ is used for normalization.

The HSIC (Eq. (\ref{equation_HSIC})) measures the dependence of two random variable vectors $\b{x}$ and $\b{y}$. Note that $\text{HSIC}=0$ and $\text{HSIC}>0$ mean that $\b{x}$ and $\b{y}$ are independent and dependent, respectively. The greater the HSIC, the more dependence they have.

\subsubsection{Interpreting LLE using HSIC}\label{section_interpret_LLE_using_HSIC}

Suppose we consider the kernel $\b{K}_x$ in HSIC to be Eq. (\ref{equation_LLE_asKernelPCA_inverseM}) and its other kernel to be a linear kernel, i.e., $\b{K}_y := \b{Y} \b{Y}^\top$ (note that the embedded points are stacked in $\b{Y}$ row-wise). 
We want to maximize the HSIC to have large dependence between the data $\b{X}$ and their embedding $\b{Y}$. This maximization can be modeled by a constrained optimization problem:
\begin{equation}
\begin{aligned}
& \underset{\b{Y}}{\text{maximize}}
& & \textbf{tr}(\b{Y}^\top\b{H}\b{M}^\dagger\b{H}\b{Y}) \overset{(a)}{=} \textbf{tr}(\b{Y}^\top\b{M}^\dagger\b{Y}), \\
& \text{subject to}
& & \frac{1}{n} \b{Y}^\top \b{Y} = \b{I},
\end{aligned}
\end{equation}
where $(a)$ is because the matrix $\b{M}$ is already double-centered \cite{alipanahi2011guided}. 
This maximization problem can be converted to a minimization problem as:
\begin{equation}\label{equation_GLLE_optimization_embedding}
\begin{aligned}
& \underset{\b{Y}}{\text{minimize}}
& & \textbf{tr}(\b{Y}^\top\b{M}\b{Y}), \\
& \text{subject to}
& & \frac{1}{n} \b{Y}^\top \b{Y} = \b{I},
\end{aligned}
\end{equation}
which is equivalent to Eq. (\ref{equation_LLE_linearEmbedding_2}), ignoring the second constraint in Eq. (\ref{equation_LLE_linearEmbedding_2}) which is already satisfied. This shows that the optimization of embedding in LLE can be seeing as maximizing the HSIC (or dependence) between the input and embedding data. 

\subsubsection{Guiding LLE Using Labels}

For discrimination of classes, consider maximization of dependence between a linear kernel over embedding and a kernel over class labels (targets), denoted by $\b{K}_t$:
\begin{equation}\label{equation_GLLE_optimization_labels}
\begin{aligned}
& \underset{\b{Y}}{\text{maximize}}
& & \textbf{tr}(\b{Y}^\top\b{H}\b{K}_t\b{H}\b{Y}), \\
& \text{subject to}
& & \frac{1}{n} \b{Y}^\top \b{Y} = \b{I},
\end{aligned}
\end{equation}
which can be also converted to a minimization problem using the pseudo-inverse of $\b{K}_t$. The kernel over labels can be a delta kernel \cite{barshan2011supervised,ghojogh2019unsupervised}. 
After converting Eq. (\ref{equation_GLLE_optimization_labels}) to minimization, we can combine Eqs. (\ref{equation_GLLE_optimization_embedding}) and (\ref{equation_GLLE_optimization_labels}) as:
\begin{equation}
\begin{aligned}
& \underset{\b{Y}}{\text{minimize}}
& & \textbf{tr}\big(\b{Y}^\top((1-\alpha)\b{M} + \alpha \b{K}_t)\b{Y}\big), \\
& \text{subject to}
& & \frac{1}{n} \b{Y}^\top \b{Y} = \b{I},
\end{aligned}
\end{equation}
where $\alpha \in [0,1]$.
The solution to this optimization problem is the smallest $p$ eigenvectors of $(1-\alpha)\b{M} + \alpha \b{K}_t$ \cite{ghojogh2019eigenvalue} after ignoring the first eigenvector with eigenvalue zero. 
Note that this optimization guides LLE to have an embedding with more discrimination of classes.

\section{Robust Locally Linear Embedding}\label{section_robust_LLE}

In presence of outliers and noise, LLE cannot preserve the local structure of manifold well enough because some bias is introduced to reconstruction of points by the outliers \cite{chang2006robust}. 
Therefore, Robust LLE (RLLE) is proposed to handle outliers in LLE. There exist at least two methods for RLLE which we explain in the following.

\subsection{Robust LLE Using Least Squares Problem}

One approach for RLLE is using least squares problem to handle noise \cite{chang2006robust}. 
This RLLE uses an iterative optimization approach \cite{jain2017non} where it iterates between Principal Component Analysis (PCA) and finding reliability weights. In every iteration, for every point $\b{x}_i$, it minimizes the weighted reconstruction error using PCA \cite{ghojogh2019unsupervised} by a least squares problem:
\begin{equation}\label{equation_RLLE_PCA}
\begin{aligned}
& \underset{\b{U}_i}{\text{minimize}}
& & \sum_{j=1}^k a_{ij}\, e_{ij} := \sum_{j=1}^k a_{ij}\, ||\b{x}_{ij} - \b{b}_i - \b{U}_i\,\b{y}_{ij}||_2^2,
\end{aligned}
\end{equation}
where $\b{b}_i \in \mathbb{R}^d$ and $\b{U}_i \in \mathbb{R}^{d \times p}$ are the bias and PCA projection matrix, respectively, $\b{y}_{ij} \in \mathbb{R}^{p}$ is the embedding of $\b{x}_{ij}$, and $\{a_{ij}\}_{j=1}^k$ are the reliability weights. 
The solution to this optimization is \cite{chang2006robust}:
\begin{align}\label{equation_RLLE_b}
& \b{b}_i := \frac{\sum_{j=1}^k a_{ij}\, \b{x}_{ij}}{\sum_{j=1}^k a_{ij}}, 
\end{align}
and the columns of $\b{U}_i$ are the top $p$ eigenvectors of the covariance matrix over the neighbors:
\begin{align}\label{equation_RLLE_S}
\b{S}_i := \frac{1}{k} \sum_{j=1}^k a_{ij}\, (\b{x}_{ij} - \b{b}_i) (\b{x}_{ij} - \b{b}_i)^\top.
\end{align}
Then, the weights $\{a_j\}_{j=1}^k$ are obtained inspired by the Huber function as \cite{chang2006robust}:
\begin{align}\label{equation_RLLE_a}
a_{ij} := 
\left\{
    \begin{array}{ll}
        1 & \mbox{if } e_{ij} \leq c_i, \\
        c_i/e_j & \mbox{if } e_{ij} > c_i,
    \end{array}
\right.
\end{align}
where $e_{ij}$ is defined in Eq. (\ref{equation_RLLE_PCA}) and $c_i$ is the mean error residual, i.e., $c_i := (1/k) \sum_{j=1}^k e_{ij}$. 
Using an iterative approach, or Iteratively Reweighted Least Squares (IRLS) \cite{holland1977robust}, $\b{b}_i$, $\b{U}_i$, and $\{a_{ij}\}_{j=1}^k$ are fine tuned for the $k$ neighbors of every point $\b{x}_i$, by Eqs. (\ref{equation_RLLE_b}), (\ref{equation_RLLE_S}), and (\ref{equation_RLLE_a}). In this way, the reliability weights $\{a_{ij}\}_{j=1}^k$ are calculated for every point. Let the mean reliability weights over the neighbors of a point determines the reliability weight of that point. We calculate it as $s_i := (1/k) \sum_{j=1}^k a_{ij}$. Then, RLLE weights the objective of Eq. (\ref{equation_LLE_linearEmbedding}) as \cite{chang2006robust}:
\begin{equation}
\begin{aligned}
& \underset{\b{Y}}{\text{minimize}}
& & \sum_{i=1}^n s_i \Big|\Big|\b{y}_i - \sum_{j=1}^n w_{ij} \b{y}_j\Big|\Big|_2^2,
\end{aligned}
\end{equation}
with the constraints in Eq. (\ref{equation_LLE_linearEmbedding}). Hence, the embeddings are weighted to be robust to outliers. 

\subsection{Robust LLE Using Penalty Functions}

Another method for RLLE uses penalty function for regularized optimization \cite{winlaw2011robust}. In presence of noise or outliers, some weights of reconstruction of a point by its neighbors explode because the distance of outliers from other points is usually large. The paper \cite{winlaw2011robust} proposes two different penalty functions for RLLE, explained in the following. 

\subsubsection{RLLE with $\ell_2$ Norm Penalty}

The penalty function can be $\ell_2$ norm. In RLLE, Eq. (\ref{equation_LLE_linearReconstruct_2}) is regularized, with the regularization parameter $\gamma$, as \cite{winlaw2011robust}:
\begin{equation}
\begin{aligned}
& \underset{\{\widetilde{\b{w}}_i\}_{i=1}^n}{\text{minimize}}
& & \sum_{i=1}^n \widetilde{\b{w}}_i^\top \b{G}_i\, \widetilde{\b{w}}_i + \gamma \|\widetilde{\b{w}}_i\|_2^2, \\
& \text{subject to}
& & \b{1}^\top \widetilde{\b{w}}_i = 1, ~~~ \forall i \in \{1, \dots, n\}.
\end{aligned}
\end{equation}
The Lagrangian for this optimization is \cite{boyd2004convex}:
\begin{align*}
\mathcal{L} = \sum_{i=1}^n \widetilde{\b{w}}_i^\top \b{G}_i\, \widetilde{\b{w}}_i + \gamma \|\widetilde{\b{w}}_i\|_2^2 - \sum_{i=1}^n \lambda_i\, (\b{1}^\top \widetilde{\b{w}}_i - 1).
\end{align*}
Setting the derivative of Lagrangian to zero gives:
\begin{align}
&\mathbb{R}^{k} \ni \frac{\partial \mathcal{L}}{\partial \widetilde{\b{w}}_i} = 2 \b{G}_i \widetilde{\b{w}}_i + 2\gamma \widetilde{\b{w}}_i - \lambda_i \b{1} \overset{\text{set}}{=} \b{0}, \nonumber \\
&\implies \widetilde{\b{w}}_i = \frac{\lambda_i}{2} (\b{G}_i + \gamma \b{I})^{-1} \b{1}. \label{equation_RLLE_derivative_lagrangian_1} \\
&\mathbb{R} \ni \frac{\partial \mathcal{L}}{\partial \lambda} = \b{1}^\top \widetilde{\b{w}}_i - 1 \overset{\text{set}}{=} 0 \implies \b{1}^\top \widetilde{\b{w}}_i = 1. \label{equation_RLLE_derivative_lagrangian_2}
\end{align}
Using Eqs. (\ref{equation_RLLE_derivative_lagrangian_1}) and (\ref{equation_RLLE_derivative_lagrangian_2}), we have:
\begin{align*}
\frac{\lambda_i}{2} \b{1}^\top (\b{G}_i + \gamma \b{I})^{-1} \b{1} = 1 \implies \lambda_i = \frac{2}{\b{1}^\top (\b{G}_i + \gamma \b{I})^{-1} \b{1}}. 
\end{align*}
Hence:
\begin{align}
\widetilde{\b{w}}_i = \frac{\lambda_i}{2} (\b{G}_i + \gamma \b{I})^{-1} \b{1} = \frac{(\b{G}_i + \gamma \b{I})^{-1} \b{1}}{\b{1}^\top (\b{G}_i + \gamma \b{I})^{-1} \b{1}}.
\end{align}
Note that in addition to better handling of noise, this regularization solves the problem of possible singularity of the matrix $\b{G}_i$ by strengthening its main diagonal.  

\subsubsection{RLLE with Elastic-Net Penalty}

Another way of regularization for RLLE is using the elastic-net penalty function \cite{zou2005regularization} to also incorporate sparsity in the solution. 
This RLLE regularizes Eq. (\ref{equation_LLE_linearReconstruct_2}) as \cite{winlaw2011robust}:
\begin{equation}\label{equation_RLLE_linearReconstruct_elastic_net1}
\begin{aligned}
& \underset{\{\widetilde{\b{w}}_i\}_{i=1}^n}{\text{minimize}}
& & \sum_{i=1}^n \widetilde{\b{w}}_i^\top \b{G}_i\, \widetilde{\b{w}}_i +\! \gamma (\alpha \|\widetilde{\b{w}}_i\|_2^2\! +\! (1-\alpha) \|\widetilde{\b{w}}_i\|_1), \\
& \text{subject to}
& & \b{1}^\top \widetilde{\b{w}}_i = 1, ~~~ \forall i \in \{1, \dots, n\},
\end{aligned}
\end{equation}
where $\alpha \in [0,1]$. Note that $(\alpha \|\widetilde{\b{w}}_i\|_2^2 + (1-\alpha) \|\widetilde{\b{w}}_i\|_1)$ is the elastic-net function \cite{zou2005regularization}. 
As $\ell_1$ norm, i.e. $\|\widetilde{\b{w}}_i\|_1 = \sum_{j=1}^k |\widetilde{w}_{ij}|$, is not differentiable, we use $\widetilde{w}_{ij} := \widetilde{w}_{ij,+} - \widetilde{w}_{ij,-}$ where:
\begin{align}
\left\{
    \begin{array}{ll}
        \widetilde{w}_{ij,+} := |\widetilde{w}_{ij}|, \widetilde{w}_{ij,-} := 0 & \mbox{if } \widetilde{w}_{ij} \geq 0, \\
        \widetilde{w}_{ij,+} := 0, \widetilde{w}_{ij,-} := -|\widetilde{w}_{ij}| & \mbox{if } \widetilde{w}_{ij} \geq 0.
    \end{array}
\right.
\end{align}
Hence $|\widetilde{w}_{ij}| := \widetilde{w}_{ij,+} + \widetilde{w}_{ij,-}$. 
We define 
$\mathbb{R}^{k} \ni \widetilde{\b{w}}_{i,+} := [\widetilde{w}_{i1,+}, \dots, \widetilde{w}_{ik,+}]^\top$ 
and
$\mathbb{R}^{k} \ni \widetilde{\b{w}}_{i,-} := [\widetilde{w}_{i1,-}, \dots, \widetilde{w}_{ik,-}]^\top$ 
and
$\mathbb{R}^{2k} \ni \widetilde{\b{w}}^*_i := [\widetilde{\b{w}}_{i,+}^\top, \widetilde{\b{w}}_{i,-}^\top]^\top$ 
and
$\mathbb{R}^{d \times 2k} \ni \b{X}^*_i := [\b{X}_i, -\b{X}_i]$
and 
$\mathbb{R}^{2k \times 2k} \ni \b{G}^*_i := (\b{x}_i \b{1}_{2k\times 1}^\top - \b{X}^*_i)^\top (\b{x}_i \b{1}_{2k\times 1}^\top - \b{X}^*_i)$. 
Eq. (\ref{equation_RLLE_linearReconstruct_elastic_net1}) can be restated as:
\begin{equation}\label{equation_RLLE_linearReconstruct_elastic_net2}
\begin{aligned}
& \underset{\{\widetilde{\b{w}}^*_i\}_{i=1}^n}{\text{minimize}}
& & \sum_{i=1}^n \widetilde{\b{w}}_i^{*\top} \b{G}_i^*\, \widetilde{\b{w}}_i^* +\! \gamma (1-\alpha) \b{1}_{2k \times 1}^\top \widetilde{\b{w}}_i^*, \\
& \text{subject to}
& & \b{1}_{k \times 1}^\top \widetilde{\b{w}}_{i,+}^* - \b{1}_{k \times 1}^\top \widetilde{\b{w}}_{i,-}^* = 1, \\
& & & \widetilde{\b{w}}_{i}^* \succeq 0, ~~~ \forall i \in \{1, \dots, n\}.
\end{aligned}
\end{equation}
This optimization problem can be solved by sequential quadratic programming \cite{boggs1995sequential}. 

\section{Fusion of LLE with Other Manifold Learning Methods}\label{section_LLE_fusion_with_others}

\subsection{LLE with Geodesic Distances: Fusion of LLE with Isomap}

ISOLLE \cite{varini2005isolle} fuses LLE and Isomap \cite{tenenbaum2000global,ghojogh2020multidimensional}. Although LLE is a nonlinear manifold learning method, its $k$NN construction is linear because of usage of Euclidean distance. ISOLLE uses geodesic distance, which is alo used in Isomap, in the LLE method. 

The \textit{geodesic distance} is the length of shortest path between two points on the possibly curvy manifold. 
It is ideal to use the geodesic distance; however, calculation of the geodesic distance is very difficult because it requires traversing from a point to another point on the manifold. This calculation requires differential geometry and Riemannian manifold calculations \cite{aubin2001course}.   
Therefore, ISOLLE approximates the geodesic distance by piece-wise Euclidean distances. It finds the $k$-Nearest Neighbors ($k$NN) graph of dataset. Then, the shortest path between two points, through their neighbors, is found using a shortest-path algorithm such as the Dijkstra algorithm or the Floyd-Warshal algorithm \cite{cormen2009introduction}. A sklearn function in python for this is ``graph\_shortest\_path'' from the package ``sklearn.utils.graph\_shortest\_path''. 
The approximated geodesic distance can be formulated as \cite{bengio2004out}:
\begin{equation}\label{equation_geodesic_distance_matrix}
\b{D}^{(g)}_{ij} := \min_{\b{r}} \sum_{i = 2}^{l} \|\b{r}_i - \b{r}_{i+1}\|_2,
\end{equation}
where $l \geq 2$ is the length of sequence of points $\b{r}_i \in \{\b{x}_i\}_{i=1}^n$ and $\b{D}^{(g)}_{ij}$ denotes the $(i,j)$-th element of the geodesic distance matrix $\b{D}^{(g)} \in \mathbb{R}^{n \times n}$. 
For more information on geodesic distance, refer to \cite{ghojogh2020multidimensional}. 

ISOLLE makes use of geodesic distance matrix $\b{D}^{(g)}$, rather than the Euclidean distance matrix $\b{D}$, for construction of the $k$NN graph. The rest of ISOLLE is the same as in LLE. 

\subsection{Fusion of LLE with PCA}

LLE is fused with Principal Component Analysis (PCA) \cite{ghojogh2019unsupervised} in the LLE-guided PCA (LLE-PCA) \cite{jiang2018robust}. 
We denote centered data by $\mathbb{R}^{d \times n} \ni \breve{\b{X}} := \b{X} \b{H}$ where $\b{H} := \b{I} - (1/n) \b{1}\b{1}^\top$ is the centering matrix. 
PCA subspace can be found by Singular Value Decomposition (SVD) on the reconstructed data $\widehat{\b{X}}$, i.e., $\widehat{\b{X}} = \b{U}\b{\Sigma}\b{V}$. According to orthogonality of matrices in SVD, we have: $\b{U}^\top \b{U} = \b{I}$ and $\b{V} \b{U}^\top = \b{I}$. Minimization of reconstruction error is \cite{ghojogh2019unsupervised}:
\begin{equation}\label{equation_PCA_reconstruction}
\begin{aligned}
& \underset{\b{U},\b{\Sigma},\b{V}}{\text{minimize}}
& & ||\breve{\b{X}} - \widehat{\b{X}}||_F^2 = ||\breve{\b{X}} - \b{U}\b{\Sigma}\b{V}||_F^2, \\
& \text{subject to}
& & \b{U}^\top \b{U} = \b{I}, \\
& & & \b{V} \b{V}^\top = \b{I}.
\end{aligned}
\end{equation}
We absorb $\b{U}$ and $\b{\Sigma}$ to have $\widehat{\b{X}} = \b{U}\b{\Sigma}\b{V} = \b{R} \b{V}$ where $\b{R} = \b{U}\b{\Sigma}$. 
The embedded data or the projected data into the $p$-dimensional embedding space is $\mathbb{R}^{p \times n} \ni \b{Y}^\top := \b{U}^\top\breve{\b{X}}$ where $\b{U} \in \mathbb{R}^{d \times p}$ is the projection matrix. Therefore, the reconstructed data are $\widehat{\b{X}} = \b{U} \b{U}^\top \breve{\b{X}} = \b{U} \b{Y}^\top$. In summary, up to scale of singular values, we can consider the equality of $\b{RV}$ and $\b{U}\b{Y}^\top$. Hence, up to scale, we have $\b{Y} = \b{V}$ and $||\breve{\b{X}} - \b{U}\b{\Sigma}\b{V}||_F^2 = ||\breve{\b{X}} - \b{R}\b{Y}^\top||_F^2$. Note that according to the first constraint in Eq. (\ref{equation_LLE_linearEmbedding_2}), we have $\b{V} \b{V}^\top = \b{Y}^\top \b{Y} = \b{I}$ up to scale so the second constraint in Eq. (\ref{equation_PCA_reconstruction}) is automatically satisfied. To sum up, Eq. (\ref{equation_PCA_reconstruction}) is restated to:
\begin{equation}
\begin{aligned}
& \underset{\b{R}}{\text{minimize}}
& & ||\breve{\b{X}} - \b{R}\b{Y}^\top||_F^2.
\end{aligned}
\end{equation}
The Lagrangian of this optimization is \cite{boyd2004convex}:
\begin{align}
&\mathcal{L} = \frac{\partial ||\breve{\b{X}} - \b{R}\b{Y}^\top||_F^2}{\partial \b{R}} = 2 (\breve{\b{X}} - \b{R}\b{Y}^\top) \b{Y} \overset{\text{set}}{=} \b{0} \nonumber \\
&\implies \breve{\b{X}} - \b{R}\b{Y}^\top = \b{0} \implies \b{R} = \breve{\b{X}}(\b{Y}^\top)^{-1} \overset{(a)}{=} \breve{\b{X}} \b{Y}, \label{equation_LLE_PCA_R}
\end{align}
where $(a)$ is because $\b{Y}$ is an orthogonal matrix as we had $\b{Y}^\top \b{Y} = \b{I}$.
LLE-PCA \cite{jiang2018robust} centers data first. Then, it applies LLE to data for finding the embedding $\b{Y} \in \mathbb{R}^{n \times p}$. Then, it projects data onto the PCA subspace:
\begin{align}
\mathbb{R}^{d \times n} \ni \b{Y}_\text{LLE-PCA} := \b{R} \b{Y}^\top \overset{(\ref{equation_LLE_PCA_R})}{=} \breve{\b{X}} \b{Y} \b{Y}^\top,
\end{align}
stacked column-wise. Considering merely the first $p$ rows gives us the $p$-dimensional embedding $\b{Y}_\text{LLE-PCA} \in \mathbb{R}^{p \times n}$. 

\subsection{Fusion of LLE with FDA (or LDA)}

Unified LLE and Linear Discriminant Analysis Algorithm (ULLELDA) \cite{zhang2004unified} fuses LLE and FDA \cite{ghojogh2019fisher} (or LDA \cite{ghojogh2019linear}). 
First, it applies LLE on the high dimensional data to find the embeddings $\{\b{y}_i \in \mathbb{R}^p\}_{i=1}^n$ and the weights $\{w_{ij}\}_{i,j=1}^n$. These embeddings are projected onto the FDA subspace (see \cite{ghojogh2019fisher}) to have new embeddings $\{\b{z}_i \in \mathbb{R}^p\}_{i=1}^n$. 
The final embedding of $\b{x}_i$ is obtained as:
\begin{align}
\mathbb{R}^p \ni \b{y}_{i,\text{ULLELDA}} := \sum_{j=1}^n w_{ij}\, \b{z}_j.
\end{align}

\subsection{Fusion of LLE with FDA and Graph Embedding: Discriminant LLE}\label{section_discriminant_LLE}

Discriminant LLE (DLLE) \cite{li2008discriminant} is a supervised LLE method. Its overall idea is (I) to use the $k$NN of every point only from the points in the same class as the point and (II) to maximize and minimize the inter- and intra-class variances of data. 

DLLE uses $k$NN obtained from the neighbors of points from their classes and uses this $k$NN graph in optimization (\ref{equation_LLE_linearReconstruct}). 
Then, the weight matrix $\b{W} = [w_{ij}] \in \mathbb{R}^{n \times n}$ is obtained by Eq. (\ref{equaion_LLE_weight_and_weightHat}). 
A similarity matrix $\b{S} \in \mathbb{R}^{n \times n}$ is defined using the obtained weight matrix:
\begin{align}
\b{S}(i,j) := 
\left\{
    \begin{array}{ll}
        (\b{W} + \b{W}^\top - \b{W}^\top \b{W})(i,j) & \mbox{if } i = j, \\
        0 & \mbox{Otherwise,}
    \end{array}
\right.
\end{align}
inspired by graph embedding \cite{yan2005graph}. 
It also finds a $k$NN graph by considering the neighbors of a point from the different classes than the class of point. It defines a dissimilarity (or between-class) matrix $\b{B} \in \mathbb{R}^{n \times n}$ by:
\begin{align}
\b{B}(i,j) := 
\left\{
    \begin{array}{ll}
        1/k & \mbox{if } c_i \neq c_j, \\
        0 & \mbox{Otherwise.}
    \end{array}
\right.
\end{align}
Let the Laplacian matrices of $\b{S}$ and $\b{B}$ be denoted by $\b{L}_S$ and $\b{L}_B$, respectively. 
DLLE finds a projection matrix $\b{U}$ for maximizing and minimizing the inter- and intra-class variances:
\begin{equation}
\begin{aligned}
& \underset{\b{U}}{\text{maximize}}
& & \frac{\textbf{tr}(\b{U}^\top \b{X} \b{L}_B \b{X}^\top \b{U})}{\textbf{tr}(\b{U}^\top \b{X} \b{L}_S \b{X}^\top \b{U})},
\end{aligned}
\end{equation}
which is a Rayleigh-Ritz quotient \cite{ghojogh2019unsupervised} whose solution is a generalized eigenvalue problem $(\b{X} \b{L}_B \b{X}^\top, \b{X} \b{L}_S \b{X}^\top)$ \cite{ghojogh2019eigenvalue}. This optimization is inspired by Fisher discriminant analysis \cite{ghojogh2019fisher}. 

\subsection{Fusion of LLE with Isotop}

The paper \cite{lee2003locally} fuses LLE and Isotop \cite{lee2002nonlinear}. It first applied LLE on data to find $p$-dimensional embeddings.  Then, competitive learning \cite{ahalt1990competitive} is used for vector quantization of the embeddings. Some prototypes, as the final embeddings, are initialized. Afterwards, some random points are randomly drawn from Gaussian distributions. The prototypes, close to the random Gaussian points, are updated using a rule found in \cite{lee2003locally}. For the sake of brevity, we do not cover all details of this method in this paper.

\section{Weighted Locally Linear Embedding}\label{section_weighted_LLE}

Some works have been done on weighting the distances, reconstruction weights, or the embedding in LLE. In the following, we explain these works briefly. 

\subsection{Weighted LLE for Deformed Distributed Data}

Weighted LLE \cite{pan2009weighted} improves LLE especially if the distribution of data is deflated in the sense that it is much different from Gaussian distribution. 
They make use of a weighted distance defined as \cite{zhou2006improving}:
\begin{align}\label{equation_distance_deformed}
\text{dist}(\b{x}_i, \b{x}_j) &:= \frac{\|\b{x}_i - \b{x}_j\|_2}{a_i + b_i \frac{(\b{x}_i - \b{x}_j)^\top \b{\tau}_i}{\|\b{x}_i - \b{x}_j\|_2}} = \frac{\|\b{x}_i - \b{x}_j\|_2}{(a_i + b_j \cos \theta)},
\end{align}
where $\b{v}_{ij} := \b{x}_{ij} - \b{x}_i$ is calculated using the $k$NN by the Euclidean distance and then \cite{pan2009weighted}:
\begin{align}
&\b{\tau}_i := \frac{\b{g}_i}{\|\b{g}_i\|_2}, \quad a_i := \frac{l_i}{c_2}, \quad b_i := \frac{\|\b{g}_i\|_2}{c_1}, \\
& \b{g}_i := \frac{1}{k} \sum_{j=1}^k \b{v}_{ij}, \quad l_i := \frac{1}{k} \sum_{j=1}^k \|\b{v}_{ij}\|_2, \\
& c_1 = \sqrt{2} \frac{\Gamma((d+1)/2)}{\Gamma(d/2)\, d}, \quad c_2 = \sqrt{2} \frac{\Gamma((d+1)/2)}{\Gamma(d/2)},
\end{align}
where $\Gamma$ is the Gamma function and $d$ is the dimensionality of input space. 
Using Eq. (\ref{equation_distance_deformed}) as the distance rather than the Euclidean distance, we find the $k$NN graph. In formulation of LLE, this obtained $k$NN is used and the rest of algorithm is the same as in LLE. 

\subsection{Weighted LLE Using Probability of Occurrence}

There is a weighted LLE method using probability of occurrence  \cite{mekuz2005face} which is also applied in the field of face recognition. Assume data have a probability distribution; for example, a mixture distribution can be fitted to data using the expectation maximization algorithm \cite{ghojogh2019fitting}. Let the probability of occurrence of data point $\b{x}_i$ be $p_i$. The distance used in this weighted LLE is weighted by the probability of occurrence:
\begin{align}
\text{dist}^2(\b{x}_i, \b{x}_j) := \frac{\|\b{x}_i - \b{x}_j\|_2^2}{p_i}. 
\end{align}
Note that this weighting increases the distance of a point from its neighbors if its probability is low. This makes sense because an outlier or anomaly should be considered farther from other normal points. This makes LLE more robust to outliers. 

Using this weighted distance rather than the Euclidean distance, the $k$NN graph is calculated. Moreover, the Gram matrix, Eq. (\ref{equation_G}), is weighted by the probabilities of occurrence. If $\b{G}_i(a,b)$ denotes the $(a,b)$-th element of $\b{G}_i$, it is weighted as:
\begin{align}
\b{G}_i(a,b) := \sqrt{p_i\, p_j}\, \b{G}_i(a,b). 
\end{align}
The rest of algorithm is the same as in LLE. 

\subsection{Supervised LLE by Adjusting Weights}\label{section_adjusted_weights_LLE}

There is a supervised LLE method making use of labels to adjust the weights \cite{he2019nonlinear}. The obtained weights, by Eq. (\ref{equation_w_tilde_solution}), in LLE are weighted using the class labels. If two points are in the same class, the reconstruction weight between them is strengthened because they are similar (in the same class); otherwise, the weight is decreased:
\begin{align}
\widetilde{w}_{ij} \gets 
\left\{
    \begin{array}{ll}
        \widetilde{w}_{ij} + \delta & \mbox{if } c_i = c_j, \\
        \widetilde{w}_{ij} - \delta & \mbox{Otherwise.}
    \end{array}
\right.
\end{align}

\subsection{Modified Locally Linear Embedding}

Modified LLE (MLLE) \cite{zhang2007mlle} modifies or adjusts the reconstruction weights. It defines some new weights as:
\begin{align}
\mathbb{R}^k \ni \widetilde{\b{w}}_i^{(l)} = (1 - \alpha_i)\, \widetilde{\b{w}}_i + \b{V}_i\, \b{J}_i(:,l), 
\end{align}
for $l \in \{1, \dots, s_i\}$, where $\b{V}_i \in \mathbb{R}^{k \times s_i}$ is the matrix containing the $s_i$ smallest right singular vectors of $\b{G}_i$, $\alpha_i := (1 / \sqrt{s}_i) \|\b{v}_i\|_2$, $\b{v}_i := \b{V}_i^\top \b{1}_{k \times 1} \in \mathbb{R}^{s_i}$, and $\b{J}_i$ is a Householder matrix \cite{householder1953principles} satisfying $\b{H}_i \b{V}_i^\top \b{1}_{k \times 1} = \alpha_i \b{1}_{s_i \times 1}$. 
MLLE uses $\widetilde{\b{w}}_i^{(l)}$ rather than $\widetilde{\b{w}}_i$ in Eq. (\ref{equaion_LLE_weight_and_weightHat}) to have $w_{ij}^{(l)}$. 
This method slightly modifies the objective in optimization (\ref{equation_LLE_linearEmbedding}):
\begin{equation}
\begin{aligned}
& \underset{\b{Y}}{\text{minimize}}
& & \sum_{i=1}^n \sum_{l=1}^{s_i} \Big|\Big|\b{y}_i - \sum_{j=1}^n w_{ij}^{(l)} \b{y}_j\Big|\Big|_2^2,
\end{aligned}
\end{equation}
with the constraints in Eq. (\ref{equation_LLE_linearEmbedding}). 
The rest of algorithm is similarly solved as in LLE but with this modified objective function.

\subsection{Iterative Locally Linear Embedding}

Iterative LLE \cite{kong2012iterative} is a LLE-based method which has made several modifications to LLE. First, it restricts the weights to be non-negative. Hence, it changes Eq. (\ref{equation_LLE_linearReconstruct}) to:
\begin{equation}\label{equation_iterative_LLE_linearReconstruct}
\begin{aligned}
& \underset{\widetilde{\b{W}}}{\text{minimize}}
& & \varepsilon(\widetilde{\b{W}}) := \sum_{i=1}^n \Big|\Big|\b{x}_i - \sum_{j=1}^k \widetilde{w}_{ij} \b{x}_{ij}\Big|\Big|_2^2, \\
& \text{subject to}
& &\widetilde{w}_{ij} \geq 0, ~~~ \forall i \in \{1, \dots, n\}.
\end{aligned}
\end{equation}
Moreover, iterative LLE adjusts and weights the embedding $\b{Y}$ by including the diagonal degree matrix $\b{D} \in \mathbb{R}^{n \times n}$ to the constraint in Eq. (\ref{equation_LLE_linearEmbedding_2}):
\begin{equation}\label{equation_iterative_LLE_linearEmbedding_2}
\begin{aligned}
& \underset{\b{Y}}{\text{minimize}}
& & \textbf{tr}(\b{Y}^\top\b{M}\b{Y}), \\
& \text{subject to}
& & \frac{1}{n} \b{Y}^\top \b{D} \b{Y} = \b{I}, \\
& & & \b{Y}^\top \b{1} = \b{0},
\end{aligned}
\end{equation}
which has some relations with the spectral embedding \cite{chan1994spectral} and Laplacian embedding \cite{belkin2003laplacian}. 
The iterative LLE \cite{kong2012iterative} also iterates between the solutions of Eqs. (\ref{equation_iterative_LLE_linearReconstruct}) and (\ref{equation_iterative_LLE_linearEmbedding_2}) to improve the embedding of LLE.

% \section{LLE for Clustered Data}

% grouping LLE \cite{polito2002grouping}

% \section{Simulations}

% LLE, kernel LLE, out-of-sample, supervised LLE, landmark LLE

% \subsection{Code Implementations}

% The Python code implementations of simulations can be found in the repositories of the following github profile: \url{https://github.com/bghojogh}

% Moreover, there is an official MATLAB library for LLE which can be found in \cite{roweis2020LLEweb}. 

\section{Conclusion}\label{section_conclusion}

In this tutorial and survey paper, we explain LLE and its variants. 
We explained that the main idea of LLE is piece-wise local fitting of manifold to hopefully unfold the overall manifold. 
The quality of this unfolding depends on the parameters of LLE which can be tuned by some methods introduced in this paper.
The materials which were covered in this paper are LLE, inverse LLE, feature fusion with LLE, kernel LLE, out-of-sample embedding (using linear reconstruction and eigenfunctions), incremental LLE for streaming data, landmark LLE (using the Nystrom approximation and locally linear landmarks), parameter selection of the number of neighbors (using residual variance, Procrustes statistics, preservation neighborhood error, and local neighborhood selection), supervised and semi-supervised LLE (including SLLE, enhanced SLLE, SLLE projection, probabilistic SLLE, semi-supervised LLE), robust LLE, fusion of LLE with other manifold learning methods (including Isomap, PCA, FDA, discriminant LLE, and Isotop), weighted LLE (for deformed distributed data, using probability of occurrence, by adjusting weights, modified LLE, and iterative LLE). 
Some other LLE methods were not covered in this paper. For example, Locally Linear Image Structure Embedding (LLISE) \cite{ghojogh2019locally} formulates LLE using the Structural Similarity Index (SSIM) \cite{wang2004image} for image structure manifold learning. 
Moreover, note that there is an official MATLAB library for LLE which can be found in \cite{roweis2020LLEweb}. 

\bibliography{References}
\bibliographystyle{icml2016}

\end{document}